\def\extended{true}
\newcommand{\extref}[1]{\ref{#1}}
\newcommand{\extref}[1]{\ref{ext-#1}}
\DeclareRobustCommand*\cal{\@fontswitch\relax\mathcal}
\xpatchcmd{\thmt@restatable}{\csname #2\@xa\endcsname\ifx\@nx#1\@nx\else[{#1}]\fi}{\IfAppendix{\csname #2\@xa\endcsname}{\csname #2\@xa\endcsname\ifx\@nx#1\@nx\else[{#1}]\fi}}{}{} \makeatother
\DeclarePairedDelimiter\ceil{\lceil}{\rceil}
\newcommand{\twodots}{\mathinner {\ldotp \ldotp}}
\def\ie{{\em i.e.}\xspace}
\def\eg{{\em e.g.}\xspace}
\def\cf{{\em cf.}\xspace}
\def\reals{{\mathbb R}}
\def\cS{{\cal S}}
\def\cA{{\cal A}}
\def\cB{{\cal B}}
\def\cZ{{\cal Z}}
\def\occ{o}
\def\Occ{O}
\newcommand{\PP}[4]{P_{#2}^{#4}(#3|#1)}
\def\cH{{\cal H}}
\def\cK{{\cal K}}
\def\l{\lambda}
\def\p{1} \def\WUL{W} \def\va{{\boldsymbol{a}}}
\def\vz{{\boldsymbol{z}}}
\def\vth{{\boldsymbol{\theta}}}
\def\vTh{{\boldsymbol{\Theta}}}
\def\vpi{{\boldsymbol{\pi}}}
\def\vbeta{{\boldsymbol{\beta}}}
\def\vmu{{\boldsymbol{\mu}}}
\def\vx{{\boldsymbol{x}}}
\def\vy{{\boldsymbol{y}}}
\def\nev{\textsc{nev}} \def\nes{\textsc{nes}}  \def\nxt{T}
\def\width{\mathit{width}}
\def\depth{\tau}
\def\radius{\rho}
\def\thr{thr}
\newtheorem{theorem}{Theorem}
\newtheorem{lemma}{Lemma}
\newtheorem{corollary}{Corollary}
\newcommand{\eqdef}     {\stackrel{{\textrm{\rm\tiny def}}}{=}}
\makeatletter \newcommand{\pushright}[1]{\ifmeasuring@#1\else\omit\hfill$\displaystyle#1$\fi\ignorespaces}
\newcommand{\pushleft}[1]{\ifmeasuring@#1\else\omit$\displaystyle#1$\hfill\fi\ignorespaces}
\newcommand{\specialcell}[1]{\ifmeasuring@#1\else\omit$\displaystyle#1$\ignorespaces\fi}
\DeclarePairedDelimiter{\abs}{\lvert}{\rvert}\DeclarePairedDelimiter{\norm}{\lVert}{\rVert}
\def\player{}
\def\Player{}
\newcommand{\ifextended}[2]{\ifdefined\extended{#1}\else{#2}\fi}
\newcommand{\tcr}[1]{\textcolor{red}{#1}}
\begin{document}

\title{\Large\bf
  On Bellman's Optimality Principle for zs-POSGs}

\hypersetup{pdfauthor={Olivier Buffet, Jilles Dibangoye, Aurélien Delage, Abdallah Saffidine, Vincent Thomas},
  pdftitle={On Bellman's Optimality Principle for zs-POSGs}}

\author{  Olivier Buffet${}^1$ \hfill Jilles Dibangoye${}^2$ \hfill Aurélien Delage${}^{1,2}$ \hfill Abdallah Saffidine${}^3$  \hfill Vincent Thomas${}^1$ \\
 \\
${}^1$        Université de Lorraine, CNRS, INRIA, LORIA, F-54000 Nancy, France \\
${}^2$        Univ. Lyon, INSA Lyon, INRIA, CITI, F-69621 Villeurbanne, France \\
${}^3$        The University of New South Wales, Sydney, Australia \\
 \\
{\tt (firstname.lastname@inria.fr$|$abdallahs[@]cse.unsw.edu.au)}
}

\date{}

\maketitle
\thispagestyle{empty}

\fbox{
  \begin{minipage}{0.95\columnwidth}
    \uline{Warning:}
The work presented in this paper allows computing the $\epsilon$-optimal (Nash equilibrium) value function of a zero-sume POSG, but does not discuss how to derive a safe (aka globally consistent) strategy for a player, \ie, one that the opponent cannot exploit.
This issue has been addressed in subsequent research, which also presents several other improvements \cite{DelBufDib-corr21,DelBufDibAbd-corr22}.
See preferably \cite{DelBufDibAbd-corr22}.
    \end{minipage}
}

\subsection*{Abstract}
{\em Many non-trivial sequential decision-making problems are efficiently
  solved by relying on Bellman's optimality principle, \ie, exploiting
  the fact that sub-problems are nested recursively within the
  original problem.
Here we show how it can apply to (infinite horizon) 2-player
  zero-sum partially observable stochastic games (zs-POSGs) by
(i) taking a central planner's viewpoint, which can only reason on a
  sufficient statistic called occupancy state, and
(ii) turning such problems into zero-sum occupancy Markov games (zs-OMGs).
Then, exploiting the Lipschitz-continuity of the value function in
  occupancy space, one can derive a version of the HSVI algorithm
  (Heuristic Search Value Iteration) that provably finds an
  $\epsilon$-Nash equilibrium in finite time.
}
\subsection*{Keywords}
POSG; partially observable stochastic game; Bellman's optimality
principle; Heuristic Search Value Iteration.

\section{Introduction}

Bellman's optimality principle (Bellman's OP)
\cite{Bellman-pnas52} led to state-of-the-art solvers in many
non-trivial sequential decision-making problems, assuming
partial observability \cite{Astrom-jmaa65},
multi-objective criteria \cite{SteWhi-jacm91,Machuca-ijcai11},
collaborating agents, \eg, modeled as decentralized partially
observable Markov decision processes (Dec-POMDPs)
\cite{HanZil-aaai04,SzeChaZil-uai05,DibAmaBufCha-jair16}, or
some non-collaborative perfect information games (from Shapley's seminal work
\cite{Shapley-pnas53} to \cite{BufDibSafTho-tog20}).
In all these settings this principle exploits the recursive nesting of
sub-problems within the original problem.
An open question is whether---and how---it could be applied to
imperfect information games, which are encountered in diverse
applications such as Poker \cite{Kuhn-ctg50} or security games
\cite{BasNitGat-aaai16}.
This paper answers this question in the setting of
2-player zero-sum partially observable stochastic games (zs-POSGs),
\ie, imperfect information games with simultaneous moves, perfect
recall, discounted rewards and a possibly infinite time horizon.

As general POSGs and Dec-POMDPs, infinite-horizon zs-POSGs are
undecidable, and their finite-horizon approximations are in NEXP
\cite{MadHanCon-aaai99,Bernstein02}.
As further discussed in Section~\ref{sec|relatedWork}, solution
techniques for finite-horizon POSGs, or other imperfect information
games that can be formulated as extensive-form games (EFGs), typically
solve an equivalent normal-form game \cite{ShoLey09} or use a
dedicated regret-minimization mechanism
\cite{ZinJohBowPic-nips07,BroSan-science18}.
They thus do not rely on Bellman's optimality principle, except
(i) a dynamic programming approach that only constructs sets of
non-dominated solutions \cite{HanZil-aaai04},
(ii) in collaborative problems (Decentralized POMDPs), adopting the
viewpoint of a (blind) central planner
\cite{SzeChaZil-uai05,DibAmaBufCha-jair16},
and
(iii) for (mostly 2-player zero-sum) settings with observability
assumptions such that one can reason on player beliefs
\cite{GhoMcDSin-jota04,ChaDoy-tcl14,BasSte-jco15,HorBosPec-aaai17,ColKoc-jet01,HorBos-aaai19}.
Here, we do not make any assumption beyond the game being 2-player
zero-sum, in particular regarding observability of the state and
actions.

As for a number of Dec-POMDP solvers, our approach adopts the viewpoint
not of a player, but of a central (offline) planner that prescribes
individual strategies to the players \cite{SzeChaZil-uai05},
which allows turning a zs-POSG into a non-observable game
for which Bellman's optimality principle applies.
This is achieved in Section~\ref{sec|oSG}
(after background Section~\ref{sec|background})
while reasoning not on a player's belief over the game state (as
feasible in POMDPs or some particular games), but on the central
planner's (blind) belief, a statistic called occupancy state and that
we prove to be sufficient for optimal planning, as
Dibangoye et al. did for Dec-POMDPs  \cite{DibAmaBufCha-jair16}.
In Section~\ref{sec|approxV}, our Bellman/Shapley operator is proved
to induce an optimal game value function that is Lipschitz-continuous
in occupancy space, which leads to
deriving value function approximators, including upper- and
lower-bounding ones, and
discussing their initialization.
Finally, Section~\ref{sec|HSVI} describes a variant of HSVI for
zs-POSGs, and demonstrates its finite-time convergence to an
$\epsilon$-optimal solution despite the continuous (occupancy) state
and action spaces.

\section{Related Work}
\label{sec|relatedWork}

Infinite horizon POSGs are undecidable \cite{MadHanCon-aaai99}, which
justifies searching for near-optimal solutions, \eg, through finite
horizon solutions, as we will do.
There is little work on solving POSGs, in particular through
exploiting Bellman's optimality principle.
One exception is Hansen and Zilberstein's work on finite horizon POSGs \cite{HanZil-aaai04} ,
where dynamic programming (DP) incrementally constructs non-dominated
policy trees for each player, which allows then deriving a solver for
common-payoff POSGs, \ie, decentralized partially observable Markov
decision processes (Dec-POMDPs).
Here, Bellman's OP thus serves as a pre-processing phase, while we
aim at employing it in the core of algorithms.

\paragraph{Dec-POMDPs}

Bellman's OP appears as the core component of a Dec-POMDP solver when
Szer et al. \cite{SzeChaZil-uai05} adopt a planner-centric viewpoint whereby the
planner aims at providing the players with their private policies
without knowing which action-observation histories they have
experienced.
The planner's information state at $t$ thus contains the initial
belief and the joint policy up to $t$.
This leads to turning a Dec-POMDP into an information-state MDP, and
obtaining a deterministic shortest path problem that can be solved
using an A* search called MAA* (multi-agent A*).

Then, another important step is when Dibangoye et al. \cite{DibAmaBufCha-jair16} show
that
(i) the {\em occupancy state}, a statistic used to compute expected
rewards in MAA*, is in fact sufficient for planning, and
(ii) the optimal value function is piecewise linear and convex (PWLC)
in occupancy space, which allows adapting point-based POMDP solvers
using approximators of $V^*$.

\paragraph{Subclasses of POSGs}

Recent works addressed particular cases of discounted partially observable stochastic games (POSGs), 2-player and zero-sum
if not specified otherwise, exploiting the structure of the problem to
turn it into an equivalent problem for which Bellman's principle
applies.
Ghosh et al. \cite{GhoMcDSin-jota04} considered POSGs with public actions and
shared observations,
which can be turned into stochastic games defined over the common
belief space, similarly to POMDPs turned into belief MDPs.
Chatterjee and Doyen \cite{ChaDoy-tcl14}, Basu and Stettner \cite{BasSte-jco15}, and Horák et al. \cite{HorBosPec-aaai17} considered One-Sided
POSGs, \ie, scenarios where (player) $2$ (w.l.o.g.) only partially
observes the system state, and \player $1$ has access to the system
state, plus the action and observation of \player $2$.
Cole and Kocherlakota \cite{ColKoc-jet01} considered ($n$-player) POSGs with independent
private states, partially shared observability, and \player $i$'s
utility function depending on his private state and on the shared
observation.
Horák and Bošanský \cite{HorBos-aaai19} considered zs-POSGs with independent private
states and public observations, \ie, scenarios where (i) each player
$i$ has a private state $s_i$ he fully observes, and (ii) both players
receive the same public observations of each player's private
state.
{Any player's belief over the other player's private state is thus
  common knowledge.}

Focusing on the work of Horák et al. \cite{HorBosPec-aaai17,HorBos-aaai19},
in both cases convexity or concavity properties of the optimal value
function are obtained, which allow deriving upper- and lower-bounding
approximators.
These approximators are then employed in HSVI-based algorithms.
Yet, moving from MDPs and POMDPs (as in Smith's
work) to these settings induces a tree of possible futures with an
infinite branching factor, which requires changes to the algorithm,
and thus to the theoretical analysis of the finite-time convergence.
As we shall see, the present work adopts similar changes.

Wiggers et al. \cite{WigOliRoi-ecai16} prove that, using appropriate
representations, the value function associated to a zs-POSG is convex
for (maximizing) player $1$ and concave for (minimizing) player $2$.
Yet, this did not allow deriving a solver based on approximating the
value function.
Here, we exploit no convexity or concavity property of the optimal
value function, as they may not hold, but its Lipschitz continuity.

\paragraph{Imperfect Information Games}

Finite horizon (general-sum) POSGs can be written as extensive-form
games with imperfect information and perfect recall (EFGs, often
referred to as {\em imperfect information games}) \cite{OliVla-tr06},
which makes solution techniques for EFGs relevant even for
infinite-horizon POSGs.
A first approach to solving EFGs is to turn them into a normal-form
game before looking for a Nash equilibrium, thus ignoring the temporal
aspect of the problem \cite{ShoLey09} and inducing a combinatorial
explosion.
For (2-player) zs-EFGs, this leads to solving two linear programs (one
for each player).
Koller and Megiddo \cite{KolMeg-geb92} propose a different linear programming approach
for zs-EFGs that exploits the temporal aspect through the choice of
decision variables, but still does not apply Bellman's OP (see also
\cite{Stengel-geb96,KolMegSte-geb96}).

More recently, Counterfactual Regret minimization (CFR)
\cite{ZinJohBowPic-nips07} has been introduced, allowing to solve
large imperfect-information games with bounded regret such as heads-up
no limit hold’em poker, now winning against top human players
\cite{BroSan-science18}.
While some CFR-based algorithms use heuristic-search techniques, thus
somehow exploit the sequentiality of the game, they do not rely on
Bellman's OP either.

\section{Background}
\label{sec|background}

For the sake of clarity, the concepts and results of the EFG
literature used in this work will be recast in the POSG setting.
We shall employ the terminology of pure/mixed/behavioral strategies
and strategy profiles---more convenient in our non-collaborative
setting---instead of deterministic or stochastic policies (private or
joint ones)---common in the collaborative setting of Dec-POMDPs.

A (2-player) zero-sum partially observable stochastic game (zs-POSG)
is defined by a tuple
$\langle \cS, \cA^1, \cA^2, \cZ^1, \cZ^2, P, r, H, \gamma, b_0 \rangle$, where
\begin{itemize}
\item $\cS$ is a finite set of states;
\item $\cA^i$ is (player) $i$'s finite set of actions;
\item $\cZ^i$ is \player $i$'s finite set of observations;
\item $\PP{s}{a^1,a^2}{s'}{z^1,z^2}$ is the probability to transition to
  state $s'$ and receive observations $z^1$ and $z^2$ when actions
  $a^1$ and $a^2$ are performed in state $s$;
\item $r(s,a^1,a^2)$ is a (scalar) reward function (bounded by $[r_{\min},r_{\max}]$, \ie,
  $r_{\min} \eqdef \min_{s,a^1,a^2} r(s,a^1,a^2)$ and
  $r_{\max}\eqdef \max_{s,a^1,a^2} r(s,a^1,a^2)$);
\item $H \in \mathbb{N} \cup \{\infty\}$ is a temporal horizon;
\item $\gamma\in [0,1]$ is a discount factor ($H=\infty$ implies $\gamma<1$); and
\item $b_0$ is the (public/common) initial belief state.
\end{itemize}
\Player $1$ would like to maximize the expected return, defined as the
discounted sum of future rewards, while \player $2$ would like to
minimize it, what we formalize next.

From the Dec-POMDP, POSG and EFG literature, we use the
following concepts and definitions, where $i \in \{1,2\}$:
\begin{description}
\item[\Player $-i$] is \player $i$'s opponent.
\item[$\theta^i_\depth = (a^i_1, z^i_1, \dots , a^i_\depth, z^i_\depth) $] is a
  length-$\depth$ {\em action-observation history} for \player $i$.
The set of histories is
  $\Theta^i = \Theta^i_0 \cup \Theta^i_1 \cup \Theta^i_2 \cup
  \dots$, with one subset per time step.
\item[$\vth_\depth=(\theta^1_\depth,\theta^2_\depth)$] is a {\em joint
    history} at time $\depth$.
The set of joint histories is
  $\vTh = \vTh_0 \cup \vTh_1 \cup \vTh_2 \cup
  \dots$, with one subset per time step.
\item[{[$\occ_\depth$]}] An {\em occupancy state} $\occ_\depth$ at
  time $\depth$ is a probability distribution over
  state--joint-history pairs $(s,\vth_\depth)$.
($\occ_0$ is completely specified by $b_0$.)
The set of occupancy states is
  $\Occ = \Occ_0 \cup \Occ_1 \cup \Occ_2 \cup \dots $, with one subset
  per time step.
Note that this notion applies to POSGs despite the use of stochastic
  actions.
\item[{[$\pi^i_{0:\depth}$]}] A {\em pure strategy} for \player $i$ is
  a mapping $\pi^i_{0:\depth}$ from private histories in $\Theta^i_t$
  ($\forall t \in \{0\twodots \depth\}$) to {\bf single} private
  actions. By default, $\pi^i\eqdef \pi^i_{0:H-1}$.
\item[$\vpi_{0:\depth}$]$= \langle \pi^1_{0:\depth}, \pi^2_{0:\depth}
  \rangle$ is a {\em pure strategy profile}.
\item[{[$\mu^i_{0:\depth}$]}] A {\em mixed strategy}
  $\mu^i_{0:\depth}$ for \player $i$ is a probability distribution over
  pure strategies.
It is used by first sampling one of the pure strategies (at $t=0$),
  and then executing that strategy until $t=\depth$.
\item[$\vmu_{0:\depth}$]
  $= \langle \mu^1_{0:\depth}, \mu^2_{0:\depth} \rangle$ is a {\em
    mixed strategy profile}.
\item[{[$\beta^i_\depth$]}] A {\em (behavioral) decision rule} at time
  $\depth$ for \player $i$ is a mapping $\beta^i_\depth$ from private
  histories in $\Theta^i_\depth$ to {\bf distributions} over private
  actions.
For convenience, we will note $\beta^i_\depth(\theta^i_\depth,a^i)$
  the probability to pick action $a^i$ when facing history
  $\theta^i_\depth$.
\item[$\vbeta_\depth$]
  $= \langle \beta^1_\depth, \beta^2_\depth \rangle$ is a {\em
    decision rule profile} ($\in \cB_\depth$, and noting
  $\cB = \cB_0\cup\cB_1\cup \dots$).
\item[$\beta^i_{\depth:\depth'}$]
  $= (\beta^i_\depth, \dots, \beta^i_{\depth'})$ is a {\em behavioral
    strategy} for \player $i$ from time step $\depth$ to $\depth'$
  (included).
By default, $\beta^i \eqdef \beta^i_{0:H-1}$.
\item[$\vbeta_{\depth:\depth'}$]
  $= \langle \beta^1_{\depth:\depth'}, \beta^2_{\depth:\depth'}
  \rangle$ is a {\em behavioral strategy profile}.
\item[{[$V_0(\occ_0,\vbeta) $]}] The {\em value} of a behavioral strategy profile $\vbeta$ in
  occupancy state $\occ_0$ (from time step $0$ on) is:
  \begin{align*}
    V_0(\occ_0,\vbeta) 
    & = E[\sum_{t=0}^\infty \gamma^t R_t | \Occ_0 = \occ_0, \vbeta],
  \end{align*}
  where $R_t$ is the random variable associated to the instant reward
  at time step $t$. 
[Note: This definition extends naturally to pure and mixed strategy
  profiles.]
\end{description}

The primary objective is here to find a Nash equilibrium strategy
(NES), \ie, a mixed strategy profile
$\vmu^*=\langle\mu^{1*},\mu^{2*}\rangle$ such that no player has an
incentive to deviate, which can be written:
\begin{align*}
  \forall \mu^1, V_0(\occ_0,{\mu^{1*},\mu^{2*}}) & \geq V_0(\occ_0,{\mu^{1},\mu^{2*}}), \\
  \forall \mu^2, V_0(\occ_0,{\mu^{1*},\mu^{2*}}) & \leq V_0(\occ_0,{\mu^{1*},\mu^{2}}).
\end{align*}
In such a 2-player zero-sum game, all NESs have the same
Nash-equilibrium value (NEV)
$V^*_0(\occ_0) \eqdef V_0(\occ_0,{\mu^{1*},\mu^{2*}})$.

Finite horizon POSGs being equivalent to EFGs with imperfect
information and perfect recall, the following key result for EFGs
applies to (finite $H$) POSGs:
\begin{theorem}{\cite{Kuhn-ctg53,FudTir-gt91}}
  In a game of perfect recall, mixed and behavior strategies are
  equivalent.
(More precisely: Every mixed strategy is equivalent to the unique
  behavior strategy it generates, and each behavior strategy is
  equivalent to every mixed strategy that generates it.)
\end{theorem}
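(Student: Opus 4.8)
The plan is to fix a player $i$ and a strategy of the opponent \player $-i$ (together with nature's moves), and to prove \emph{realization-equivalence}: two strategies of \player $i$ are equivalent precisely when they assign the same probability to each private history, and hence the same distribution over joint histories and the same value $V_0(\occ_0,\cdot)$. Since $V_0$ is an expectation that depends on \player $i$'s strategy only through the \emph{realization probability} of each history $\theta^i_\depth$ --- the probability that \player $i$'s own choices are consistent with $\theta^i_\depth$ after marginalizing out nature and \player $-i$ --- it suffices to show that the mixed and behavioral strategies in question yield identical realization probabilities at every private history. Here I call a pure strategy $\pi^i$ \emph{consistent} with $\theta^i_\depth = (a^i_1, z^i_1, \dots, a^i_\depth, z^i_\depth)$ when $\pi^i$ prescribes $a^i_t$ at the length-$(t{-}1)$ prefix of $\theta^i_\depth$ for every $t \le \depth$; the observations are not chosen by the strategy, so only the actions are constrained.

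For the direction \emph{mixed $\to$ behavioral}, given $\mu^i$ I would define the generated decision rules by the conditional choice probabilities
\[
\beta^i_\depth(\theta^i_\depth, a^i)
= \frac{\displaystyle\sum_{\pi^i \text{ consistent with } \theta^i_\depth,\ \pi^i(\theta^i_\depth)=a^i} \mu^i(\pi^i)}
{\displaystyle\sum_{\pi^i \text{ consistent with } \theta^i_\depth} \mu^i(\pi^i)},
\]
setting $\beta^i_\depth(\theta^i_\depth,\cdot)$ arbitrarily when the denominator (the $\mu^i$-realization probability of $\theta^i_\depth$) vanishes. The core claim is that this denominator equals the telescoping product $\prod_{t=1}^\depth \beta^i_{t-1}(\theta^i_{t-1}, a^i_t)$, where $\theta^i_{t-1}$ is the length-$(t{-}1)$ prefix; that is, the behavioral realization probability reproduces the mixed one. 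I would prove this by induction on $\depth$, the inductive step being exactly the definition of $\beta^i_\depth$ as a conditional probability. Uniqueness of the generated $\beta^i$ on reachable histories is then immediate, since the formula is forced there.

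For the converse \emph{behavioral $\to$ mixed}, given $\beta^i$ I would take the product-form mixed strategy $\mu^i(\pi^i) = \prod_{\theta^i_\depth \in \Theta^i} \beta^i_\depth(\theta^i_\depth, \pi^i(\theta^i_\depth))$, which is a valid distribution since summing each factor over its action gives $1$. Summing $\mu^i(\pi^i)$ over the pure strategies consistent with a fixed $\theta^i_\depth$ leaves the factors at the $\depth$ constrained prefixes intact and collapses every off-path factor to $1$, yielding exactly $\prod_{t=1}^\depth \beta^i_{t-1}(\theta^i_{t-1}, a^i_t)$. Hence the realization probabilities coincide and, feeding this $\mu^i$ back into the previous formula, one recovers $\beta^i$ on reachable histories. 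The two computations together establish that every mixed strategy is equivalent to the behavioral strategy it generates, and every behavioral strategy to any mixed strategy generating it; assembling over both players (using that $V_0$ is multilinear in the two strategies) gives the full statement.

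The main obstacle is justifying the telescoping identity in the first direction, and this is exactly where \emph{perfect recall} is needed: I must argue that the event ``$\pi^i$ is consistent with $\theta^i_\depth$'' decomposes as a nested chain of conditioning events, one per time step, so that the conditional probabilities multiply. Perfect recall guarantees that each prefix of $\theta^i_\depth$ is itself a well-defined private history whose reachability under $\mu^i$ is governed solely by \player $i$'s own consistent past actions --- the player never forgets an earlier choice --- so ``reaching the length-$t$ prefix'' is properly nested inside ``reaching the length-$(t{-}1)$ prefix,'' and the chain rule applies. Without perfect recall the same information set could be reached by histories demanding correlated continuations, the chain rule would fail, and a product-form $\beta^i$ could not reproduce correlations encoded in $\mu^i$. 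The zero-probability histories require only a brief remark, since they contribute nothing to either realization probability or to $V_0$.
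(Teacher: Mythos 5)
The paper does not prove this statement---it is imported as a known result with citations to Kuhn and to Fudenberg--Tirole---so there is no in-paper argument to compare against. Your realization-probability proof is the standard one from those references and is correct in the paper's finite-horizon setting: the telescoping identity $p_\depth = \beta^i_{\depth-1}(\theta^i_{\depth-1},a^i_\depth)\,p_{\depth-1}$ holds essentially by definition of the conditional choice probabilities, the product-form mixed strategy in the converse direction is well defined because $\Theta^i$ is finite when $H$ is finite, and you correctly locate where perfect recall enters (here it is automatic, since a private history $\theta^i_\depth$ records all of \player $i$'s past actions and observations, so each prefix is a well-defined nested conditioning event). The only point worth making explicit is the factorization behind your opening reduction---that the probability of any $(s,\vth_\depth)$ trajectory splits into nature's terms times each player's own realization probability, so $V_0$ indeed depends on \player $i$'s strategy only through those probabilities---but that is a one-line computation and not a gap.
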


\section{Solving POSGs as Occupancy MGs}
\label{sec|oSG}

In this section, unless stated otherwise, we assume finite horizons
and exact solutions (no $\epsilon$ error).

Here, we show
(i) how a zs-POSG can be reformulated as a different zero-sum Markov
game, and
(ii) that Bellman's optimality principle applies in this game.

\subsection{From zs-POSGs to zs-OMGs}

To solve a zs-POSG, we take the viewpoint of a central planner that
searches {\em offline} for the best behavioral strategy profile before
providing it to the players.
This contrasts with Dec-POMDPs where deterministic strategy profiles
suffice,
and means exploring a (bounded) continuous space rather than a
(finite) discrete one as for Dec-POMDPs.
Such a planner grows a partial strategy $\vbeta_{0:\depth-1}$ by
appending a decision rule profile $\vbeta_{\depth}$.
The controlled process induced in occupancy space, where actions are
decision rule profiles, is both deterministic and Markovian (see
formal details about the dynamics below): applying $\vbeta_\depth$ in
$\occ_{\depth}$ (\ie, appending it to $\vbeta_{0:\depth}$) leads to a
unique $\occ_{\depth+1}$.
Also, the expected reward at time $\depth$ is linear in occupancy
space (more precisely in the corresponding distribution over states).
All this allows reasoning not on partial behavioral strategy profiles,
but on occupancy states.
The central planner will thus
(i) infer occupancy states seen as ``beliefs'' over the possible
situations (``situation'' here meaning the current state $s$ and the
players' joint action-observation history $\vth_\depth$) which may
have been reached, although without knowing what actually happened,
and
(ii) map each occupancy state to a decision rule profile telling the
players how to act depending on their actual action-observation
histories.\footnote{In contrast, in a POMDP, the belief state depends
  on the agent's action-observation history, and is mapped to a single
  action.}
Each zs-POSG is thus turned into an equivalent game, called a {\em
  zero-sum occupancy Markov game} (zs-OMG)\footnote{We use (i)
  ``Markov game'' instead of ``stochastic game'' because the dynamics
  are not stochastic, and (ii) ``partially observable stochastic
  game'' to stick with the literature.}
formally defined by the tuple
$\langle \Occ, \cB, \nxt, r, H, \gamma, b_0 \rangle$, where:
\begin{itemize}
\item $\Occ$ is the set of occupancy states induced by the zs-POSG;
\item $\cB$ is the set of decision rule profiles of the zs-POSG;
\item $\nxt$ is a deterministic transition function that maps each
  pair $(\occ_\depth,\vbeta_\depth)$ to the (only) possible next
  occupancy state $\occ_{\depth+1}$; formally (see Lemma~\extref{lem:nextOcc} in App.~\extref{app:fromTo}),
  $\forall s', \theta^1_{\depth}, a^1, z^1, \theta^2_{\depth}, a^2,
  z^2$,
  \begin{align*}
\ifextended{\hspace*{\dimexpr-\leftmargini}}{}
    & \nxt(\occ_\depth,\vbeta_\depth)(s',(\theta^1_{\depth},a^1,z^1),(\theta^2_{\depth},a^2,z^2)) \\
    \ifextended{\hspace*{\dimexpr-\leftmargini}}{}
    & \eqdef Pr(s',(\theta^1_{\depth},a^1,z^1),(\theta^2_{\depth},a^2,z^2)) \\
    \ifextended{\hspace*{\dimexpr-\leftmargini}}{}
    & 
= \beta^1_{\depth}(\theta^1_{\depth},a^1) \beta^2_{\depth}(\theta^2_{\depth},a^2)  \sum_{s} P^{z^1,z^2}_{a^1,a^2}(s'|s) 
      \occ_{\depth} (s,\theta^1_{\depth},\theta^2_{\depth}));
\end{align*}
\item $r$ is a reward function naturally induced from the zs-POSG as
  the expected reward for the current occupancy state and decision
  rule profile:
  \begin{align*}
\ifextended{\hspace*{\dimexpr-\leftmargini}}{}
    & r(\occ_\depth,\vbeta_\depth) 
    \eqdef E[r(S,A^1,A^2) | \occ_\depth, \beta^1_\depth, \beta^2_\depth ] \\
    \ifextended{\hspace*{\dimexpr-\leftmargini}}{}
    & = \sum_{s,\vth_\depth} \occ_\depth(s,\theta^1_\depth,\theta^2_\depth) 
      \sum_{a^1,a^2} \beta^1_\depth(\theta^1,a^1) \beta^2_\depth(\theta^2,a^2)
      r(s,a^1,a^2);
  \end{align*}
  we use the same notation $r$ for zs-POSGs as the context shall
  indicate which one is discussed;
\item $H$, $\gamma$, and $b_0$ are as in the zs-POSG.
\end{itemize}

Note first that, for convenience, we directly consider behavioral
decision rules, which correspond to mixed strategies.
Of course, at $\depth$, \player $i$'s possible actions should be
decision rules defined over histories that have non-zero probability
in current $\occ_\depth$.
The dynamics being deterministic and the actions public, both players
of that new game (also denoted $1$ and $2$ while these are different
players) know the next state after each transition.
But this is no standard zs Markov game also since
(i) the mixture of two actions is equivalent to another action already
in the (continuous) action space at hand, and
(ii) at each time step, the state (occupancy) space $\Occ_\depth$ is
continuous.

We shall study the {\em subgames} of a zs-OMG,
\ie, situations where some occupancy state $\occ_\depth$ has somehow
been reached at time step $\depth$, and the central solver is looking
for rational strategies ($\beta^1_{\depth:H-1}$ and
$\beta^2_{\depth:H-1}$) to provide to the players.
$\occ_\depth$ tells which action-observation histories each player
could be facing with non-zero probability, and thus which are relevant
for planning.
We can then extend the definition of value function from time step $0$
only to any time step $\depth$ as follows (using behavioral
strategies):
\begin{align*}
  & V_\depth(\occ_\depth,\beta^1_{\depth:H-1},\beta^2_{\depth:H-1}) \\
  & = E[\sum_{t=\depth}^\infty \gamma^{t-\depth} R_t | \Occ_\depth = \occ_\depth, \beta^1_{\depth:H-1}, \beta^2_{\depth:H-1}].
\end{align*}

\subsubsection{What We Are Looking For}

For any $\depth$ and $\occ_\depth$, let us define
\begin{itemize}
\item $\vbeta^*_{\depth:H-1}(\occ_\depth)$ a NE profile for the
  subgame at $\occ_\depth$, and
\item $V^*_\depth(\occ_\depth)$ the NE value of the subgame at any
  $\occ_\depth$.
\end{itemize}
Let also $first(\cdot)$ and $rest(\cdot)$ be two functions that map a
behavioral strategy (profile) defined over $t:t'$ (where $t<t'$)
respectively to
(i) its decision rule at $t$, and
(ii) its restriction over $t+1:t'$.
If Bellman's optimality principle holds, then we expect
$rest(\vbeta^*_{\depth:H-1}(\occ_\depth))$ to be an optimal solution
of the subgame at
$\nxt(\occ_\depth,first(\vbeta^*_{\depth:H-1}(\occ_\depth))$.
Assuming that we have the optimal solution of any subgame at
$\depth+1$, \ie, $\vbeta^*_{\depth+1:H-1}(\cdot)$ is known, and thus
that $V^*_{\depth+1}(\cdot)$ is known, then one can solve the subgame
at $\occ_\depth$ by (i) solving the {\em local} game:
\begin{align*}
  Q^*_{\depth}(\occ_\depth, \vbeta_\depth)
  & \eqdef r(\occ_\depth, \vbeta_\depth) + \gamma V^*_{\depth+1}( \nxt(\occ_\depth,\vbeta_\depth) ),
\end{align*}
and (ii) then concatenating an optimal solution
$\vbeta^*_{\depth}(\occ_\depth)$ with an optimal solution of the
induced subgame at $\nxt(\occ_\depth,\vbeta^*_{\depth}(\occ_\depth))$,
\ie,
$\vbeta^*_{\depth+1:H-1}(\nxt(\occ_\depth,\vbeta^*_{\depth}(\occ_\depth)))$.

This approach would only require an algorithm manipulating behavioral
strategies.
Yet, to demonstrate that Bellman's optimality principle holds and that
this approach is valid, we will need to reason on mixed strategies.
The following section thus presents observations and preliminary
results about behavioral and mixed strategies.

\subsection{Compatibility of Strategies}

\subsubsection{Comments About Behavioral Strategies}

First, note that the probability of some action-observation history
$\theta^i_t$ (possibly completed with an action) under $\occ_\depth$
is given by
\begin{align}
 P(\theta^i_t|\occ_\depth) 
  & = \sum_{\substack{ s,\ \theta^{-i}_{\tau-1} \\ \theta^i_{\tau-1} \in \Theta^i_{\depth-1}(\theta^i_t)}} o_\tau (s,\theta^1_{\tau-1} , \theta^2_{\tau-1} ),
\end{align}
where $\Theta^i_{\depth-1}(\theta^i_t)$ is the set of $i$'s histories of
length $\depth-1$ that are prefixed with $\theta^i_t$.
With this, let us observe that a reachable occupancy state
$\occ_\depth$ may have been generated by multiple (prefix) behavioral
strategy profiles $\vbeta_{0:\depth-1}$, as long as, for any $a^i$ and
any $\theta^i_{0:t}$ ($t<\depth$) that has non-zero probability in
$\occ_\depth$,
\begin{align}
  \beta^i_{0:\tau-1}(\theta^i_{0:t},a^i) 
  & = P(a^i|\occ_\depth,\theta^i_t) 
    = \frac{P(\theta^i_t,a^i|\occ_\depth)}{P(\theta^i_t|\occ_\depth)}.
\end{align}
Such a behavioral strategy profile is said to be {\em compatible} with
$\occ_\depth$.
We will denote $\cB^i_{0:t|\occ_\depth\rangle}$ the set of behavioral
strategies of $i$ compatible with $\occ_\depth$, with $t$ often being
$\depth-1$ or $H-1$ (for conciseness, $H-1$ may be omitted).

Two important comments on compatible behavioral strategies are the
following:
\begin{itemize}
\item Let $\hat\vbeta_{0:\depth-1}$ be the partial behavioral strategy
  profile that actually led to $\occ_\depth$.
Then
(i) both $\hat\beta^i_{0:\depth-1}$ and
  $\hat\beta^{-i}_{0:\depth-1}$ influence which histories of $i$ have
  non-zero probability, but
(ii) only $\hat\beta^i_{0:\depth-1}$ influences the action-selection
  probabilities.
\item Any compatible $\beta^i_{0:\depth-1}$ can be extended in time in
  any manner without altering $\occ_\depth$.
In particular, any optimal solution of the subgame at $\occ_\depth$
  can be concatenated with any prefix behavioral strategy profile that
  may have led to $\occ_\depth$.
And, as a matter of fact, one will typically solve a subgame at
  $\occ_\depth$ without accounting for the prefix behavioral strategy
  profile that may have induced $\occ_\depth$.
\end{itemize}

In the same vein, let us consider the set
$\cB^i_{|\occ_\depth,\beta^j_\depth\rangle}$ of behavioral strategies
of $i$ that are compatible with $\occ_\depth$ and exploit the
knowledge of some additional behavioral decision rule
$\beta^j_\depth$.
In this case, 
\begin{itemize}
\item if $i=j$, then the strategies should satisfy $\beta^j_\depth$,
  \ie, induce the same probabilistic action choices (at least for
  reachable histories); and
\item if $i\neq j$, then some action-observation histories
  $\theta^i_\depth$ may become impossible due to $\beta^j_\depth$, so
  that $i$ can ignore them in practice.
\end{itemize}

To go a step further, we can note the following two points:
\begin{itemize}
\item the set $\cB^i_{|\occ_\depth,\vbeta_\depth\rangle}$ of
  behavioral strategies of $i$ that are compatible with $\occ_\depth$
  and exploit the knowledge of some additional behavioral decision
  rule profile $\vbeta_\depth$ defines a subset of
  $\cB^i_{|\nxt(\occ_\depth,\vbeta_\depth)\rangle}$,
because knowing $\occ_\depth$ and $\vbeta_\depth$ induces the same
  next occupancy state, but with the added constraint of knowing which
  exact behavioral decision rule $\beta^i_\depth$ has been followed;
yet, only the suffixes of the behavioral strategies are relevant for
  the value of the sub-game at $\nxt(\occ_\depth,\vbeta_\depth)$, so
  that both sets can be employed interchangeably;
\item following the previous observations, the set
  $\cB^i_{|\occ_\depth,\vbeta_\depth\rangle}$ can even be replaced by
  $\cB^i_{|\occ_\depth,\beta^i_\depth\rangle}$ when it comes to
  reasoning on the optimal value of a sub-game.
\end{itemize}

\subsubsection{Moving to Mixed Strategies}

Mixed strategies are usually defined from time step $0$ on,
and used by
(1) sampling at $t=0$ a pure strategy from the distribution it
specifies, then
(2) executing that pure strategy from then on.
While this may not seem appropriate at first sight, we will still use
mixed strategies defined from time step $0$ to reason on a subgame at
$\occ_\depth$, but considering the subset of strategies {\em
  compatible} with $\occ_\depth$, \ie, that induce that occupancy
state at $\depth$.

As can be noted, a mixed strategy $\mu^i$ is compatible with
$\occ_\depth$ if and only if its equivalent behavioral strategy
$\beta(\mu^i)$ is compatible with $\occ_\depth$.
Let $M^i_{0:H-1|\occ_\depth\rangle}$ be the set of mixed strategies
$\mu^i_{0:H-1|\occ_\depth\rangle}$ compatible with $\occ_\depth$
(often noted respectively $M^i_{|\occ_\depth\rangle}$ and
$\mu^i_{|\occ_\depth\rangle}$).
This allows reasoning interchangeably with behavioral or mixed
strategies of length $H-1$ as long as they are compatible with
$\occ_\depth$.

To see the benefit of using mixed strategies, let us consider the
subgame at $\occ_\depth$ in the space of compatible mixed strategy
profiles $M^i_{0:H-1|\occ_\depth\rangle}$.
The following lemma gives us a first observation on that space.

\begin{restatable}[Proof in App.~\extref{proofLemEquivalence}]{lemma}{lemConvexity}
  \label{lem:convexity}
  \IfAppendix{{\em (originally stated on
    page~\pageref{lem:convexity})}}{}
$M^i_{0:H-1|\occ_\depth\rangle}$ is convex.
\end{restatable}

In addition,
$V_\depth(\occ_\depth,\mu^1_{|\occ_\depth\rangle},\mu^2_{|\occ_\depth\rangle})$
is linear in both strategy spaces (as $V_0$ is at the initial time
step).
Because of this bi-linearity and of the convexity of both mixed
strategy spaces, we are facing a normal-form game and can apply von
Neumann's Minimax theorem, \ie, find solutions to the subgame by computing the security levels
for each player:
\begin{align*}
 & \max_{\mu^1_{|\occ_\depth\rangle} \in M^1_{|\occ_\depth\rangle}}
 \min_{\mu^2_{|\occ_\depth\rangle} \in M^2_{|\occ_\depth\rangle}}
 V_\depth(\occ_\depth,\mu^1_{|\occ_\depth\rangle},\mu^2_{|\occ_\depth\rangle}), \text{ and} \\
 & \min_{\mu^2_{|\occ_\depth\rangle} \in M^2_{|\occ_\depth\rangle}}
 \max_{\mu^1_{|\occ_\depth\rangle} \in M^1_{|\occ_\depth\rangle}}
 V_\depth(\occ_\depth,\mu^1_{|\occ_\depth\rangle},\mu^2_{|\occ_\depth\rangle}).
\end{align*}
But note that these formulas will only serve theoretical purposes.

As can be easily demonstrated (cf. Lemma~\extref{lem:nestedNEs} in
App.~\extref{app:subNEs}), any Nash equilibrium solution of our original game
$V_0(\occ_0,\cdot,\cdot)$ induces a Nash equilibrium in any of its
successive reachable subgames.
But this does not tell whether Bellman's optimality principle applies,
what we discuss next.

\subsection{Bellman's Optimality Principle}

For any $\depth$ and $\occ_\depth$, let us define
(i) $\vbeta^*_{\depth:H-1}(\occ_\depth)$ a NE profile for the subgame at
$\occ_\depth$,
(ii) $V^*_\depth(\occ_\depth)$ the NE value of the subgame at
any $\occ_\depth$, and
(iii) the {\em local} subgame at $\occ_\depth$
\begin{align*}
  Q^*_{\depth}(\occ_\depth, \vbeta_\depth)
  & \eqdef r(\occ_\depth, \vbeta_\depth) + \gamma V^*_{\depth+1}( \nxt(\occ_\depth,\vbeta_\depth) ). \end{align*}
Then, given Nash equilibrium solutions for any $\occ_{\depth+1}$, the
applicability of Bellman's optimality principle shall be proved if a
Nash equilibrium of $V_\depth(\occ_\depth,\vbeta_{\depth:H-1})$ can be
found by
(i) solving the local subgame
$Q^*_{\depth}(\occ_\depth, \vbeta_\depth)$ to get a decision rule
profile $\vbeta^*_\depth$ and
(ii) appending it to
$\vbeta^*_{\depth+1:H-1}(\nxt(\occ_\depth, \vbeta^*_\depth))$.

\paragraph{An Abnormal-Form Game?}

A first question is whether this game
$Q^*_{\depth}(\occ_\depth, \vbeta_\depth)$ is in fact a normal-form
game, \ie, whether it could be defined by a payoff matrix over pure
decision rules, payoffs for behavioral decision rules being
obtained through linear mixtures.

$V_\depth(\occ_\depth,\cdot,\cdot)$ is linear in each player's
decision rule space at each time step (\ie, in $\beta^i_{\depth'}$ for
any $i$ and $\depth'\in\{\depth \twodots H-1\}$), but multilinear in
each player's behavioral strategy space (see Lemma~\extref{cor|V|lin|dr} App.~\extref{appLinearity}),
which suggests that $Q^*_{\depth}(\occ_\depth, \cdot, \cdot)$ is not
(bi)linear in the space of decision rules at $\depth$, and thus
possibly not concave-convex.
As a consequence, we are possibly facing an {\bf ab}normal-form game
and cannot use von Neumann's Minimax theorem.

\paragraph{Properties of the Maximin and Minimax Values}

Rather than digging the concavity-convexity property further, we now
show that computing the maximin and minimax values of
$Q^*_\depth(\occ_\depth, \vbeta_\depth)$ induces finding a NE of
$V_\depth(\occ_\depth, \vbeta_{\depth:H-1})$ given NEs for any
$\occ_{\depth+1}$.

\begin{restatable}[Proof in App.~\extref{proofLemAbnormalMaximinimax}]{theorem}{lemAbnormalMaximinimax}
  \label{lem:abnormalMaximinimax}
  \IfAppendix{{\em (originally stated on
    page~\pageref{lem:abnormalMaximinimax})}}{}
In the 2p zs abnormal-form game
  $Q^*_\depth(\occ_\depth,\vbeta_\depth)$, the maximin and minimax
  values are both equal to $V^*_\depth(\occ_\depth)$---\ie, as
  previously defined, the NEV for game
  $V_\depth(\occ_\depth,\vbeta_{\depth:H-1})$---and correspond to a
  NES.
\end{restatable}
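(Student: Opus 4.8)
The plan is to never argue directly about the (possibly non-convex-concave) landscape of $Q^*_\depth$ in decision-rule space, but instead to import the minimax structure from the \emph{mixed}-strategy formulation, where it is genuinely available, and then to certify the two values with the equilibrium strategies of that formulation. By Lemma~\ref{lem:equivalence} the compatible sets $M^i_{|\occ_\depth\rangle}$ are convex and sufficient to look for a Nash equilibrium, and $V_\depth(\occ_\depth,\cdot,\cdot)$ is bilinear on $M^1_{|\occ_\depth\rangle}\times M^2_{|\occ_\depth\rangle}$; hence von Neumann's minimax theorem applies and produces a NES $\vmu^*=\langle\mu^{1*},\mu^{2*}\rangle$ with
\[
  V^*_\depth(\occ_\depth)=\max_{\mu^1}\min_{\mu^2}V_\depth(\occ_\depth,\mu^1,\mu^2)=\min_{\mu^2}\max_{\mu^1}V_\depth(\occ_\depth,\mu^1,\mu^2).
\]
Let $\beta^{1*}_\depth$ and $\beta^{2*}_\depth$ be the decision rules these equilibrium strategies induce at step $\depth$ (via Kuhn's theorem). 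I would use $\beta^{1*}_\depth$ (resp.\ $\beta^{2*}_\depth$) as the candidate maximin (resp.\ minimax) action of $Q^*_\depth$ and bound the two values of $Q^*_\depth$ against $V^*_\depth(\occ_\depth)$ from opposite sides.

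For the lower bound on the maximin, fix $\beta^{1*}_\depth$. Decomposing the value of $\mu^{1*}$ against an arbitrary opponent through the deterministic one-step dynamics gives
\[
  V_\depth(\occ_\depth,\mu^{1*},\mu^2)=r(\occ_\depth,\beta^{1*}_\depth,\beta^2_\depth)+\gamma\,V_{\depth+1}\big(\nxt(\occ_\depth,\beta^{1*}_\depth,\beta^2_\depth),\mu^{1*}_{\mathrm{cont}},\mu^2_{\mathrm{cont}}\big),
\]
where $\beta^2_\depth$ is the decision rule of $\mu^2$ at $\depth$. Minimizing over Player~$2$'s continuation while holding $\mu^{1*}_{\mathrm{cont}}$ fixed, the continuation term is at most the subgame maximin $V^*_{\depth+1}$ at the (unique, deterministically reached) successor, since a fixed Player~$1$ continuation can only be worse than its best response there. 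Taking the minimum over $\beta^2_\depth$ and using that $\mu^{1*}$ is Player~$1$'s maximin strategy then yields
\[
  V^*_\depth(\occ_\depth)=\min_{\mu^2}V_\depth(\occ_\depth,\mu^{1*},\mu^2)\le\min_{\beta^2_\depth}Q^*_\depth(\occ_\depth,\beta^{1*}_\depth,\beta^2_\depth)\le\max_{\beta^1_\depth}\min_{\beta^2_\depth}Q^*_\depth(\occ_\depth,\beta^1_\depth,\beta^2_\depth).
\]
The symmetric computation starting from $\mu^{2*}$ gives $\min_{\beta^2_\depth}\max_{\beta^1_\depth}Q^*_\depth(\occ_\depth,\beta^1_\depth,\beta^2_\depth)\le V^*_\depth(\occ_\depth)$.

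Since maximin $\le$ minimax holds for any function, chaining these two bounds forces both values of $Q^*_\depth$ to equal $V^*_\depth(\occ_\depth)$. Reading the same inequalities at $(\beta^{1*}_\depth,\beta^{2*}_\depth)$ jointly shows $Q^*_\depth(\occ_\depth,\beta^{1*}_\depth,\beta^{2*}_\depth)=V^*_\depth(\occ_\depth)$ and that this pair is a saddle point of $Q^*_\depth$. To finish the ``correspond to a NES'' claim I would append, at every successor occupancy state, the equilibrium continuations $\vbeta^*_{\depth+1:H-1}$ (which exist by the backward-induction hypothesis at $\depth+1$): the resulting profile has value $Q^*_\depth(\occ_\depth,\beta^{1*}_\depth,\beta^{2*}_\depth)=V^*_\depth(\occ_\depth)$, and any unilateral deviation is bounded, again through the Bellman decomposition, either by a saddle inequality of $Q^*_\depth$ (a change of the decision rule at $\depth$) or by the continuations being best responses on the reached successor (a change later on), so no deviation is profitable.

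The main obstacle is the value decomposition together with the continuation bound: I must write the value of a mixed prefix-plus-continuation against an arbitrary opponent as a one-step reward plus a discounted \emph{subgame} value at the single successor $\nxt(\occ_\depth,\vbeta_\depth)$, which is exactly where Kuhn's equivalence and the one-to-one occupancy-state/strategy-prefix correspondence are needed, and then dominate the fixed continuation by the successor's maximin at the correct occupancy state. Everything hinges on never invoking a saddle for $Q^*_\depth$ in decision-rule space---where it need not be convex-concave---so that von Neumann is applied only in the bilinear mixed-strategy game, whose equilibrium strategies are what certify both inequalities.
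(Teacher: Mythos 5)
Your argument is correct in substance but follows a genuinely different route from the paper's. The paper proves the result by a forward chain of \emph{equalities}: starting from $\max_{\beta^1_\depth}\min_{\beta^2_\depth}Q^*_\depth(\occ_\depth,\beta^1_\depth,\beta^2_\depth)$, it expands $V^*_{\depth+1}$ at the successor as a constrained mixed-strategy max-min, applies von Neumann's minimax theorem twice (once at $\depth+1$, once on an intermediate ``appropriate game''), and carefully merges the nested $\min$/$\max$ operators over $\beta^i_\depth$ and over the constrained sets $M^i_{|\occ_\depth,\beta^j_\depth\rangle}$ until the expression becomes $\max_{\mu^1}\min_{\mu^2}V_\depth(\occ_\depth,\mu^1,\mu^2)=V^*_\depth(\occ_\depth)$; the merging steps are the delicate part. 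You instead run a sandwich argument: invoke von Neumann only once, on the bilinear mixed-strategy subgame at $\occ_\depth$ (justified by Lemma~\ref{lem:equivalence}), extract the step-$\depth$ decision rules of an equilibrium profile, and use them as certificates to prove $V^*_\depth(\occ_\depth)\leq\max_{\beta^1_\depth}\min_{\beta^2_\depth}Q^*_\depth$ and $\min_{\beta^2_\depth}\max_{\beta^1_\depth}Q^*_\depth\leq V^*_\depth(\occ_\depth)$, closing the gap with the universal inequality $\text{maximin}\leq\text{minimax}$. This buys a cleaner structure (no operator-swapping on constrained sets) at the price of having to justify the Bellman decomposition $V_\depth(\occ_\depth,\mu^{1*},\mu^2)=r(\occ_\depth,\beta^{1*}_\depth,\beta^2_\depth)+\gamma V_{\depth+1}(\nxt(\occ_\depth,\beta^{1*}_\depth,\beta^2_\depth),\cdot,\cdot)$ for mixed strategies and the fact that the continuation of $\mu^{1*}$ is a feasible (and $\beta^2_\depth$-independent) competitor in the subgame at the successor---you correctly flag both as the load-bearing steps, and both follow from Kuhn's theorem plus Lemma~\extref{lem:nextOcc}, the same ingredients the paper uses.

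One caution on your closing paragraph. The saddle-point property of $(\beta^{1*}_\depth,\beta^{2*}_\depth)$ for the \emph{local} game $Q^*_\depth$ does follow from chaining your two inequality strings, and that is all the theorem (together with Corollary~\ref{cor:abnormalMaximinimax}) actually requires. But the further claim that appending ``the equilibrium continuations at every successor occupancy state'' yields a NES of the full subgame is not a well-defined construction as stated: player~$2$'s continuation is a map from \emph{his own} histories and cannot be indexed by the successor occupancy state, which depends on player~$1$'s (unobserved) deviation at step $\depth$. Handling a joint deviation of $\beta^1_\depth$ and the continuation therefore needs more care than ``the continuations are best responses on the reached successor.'' Since the statement only asks that the maximin/minimax values equal $V^*_\depth(\occ_\depth)$ and correspond to a NES of $Q^*_\depth$, you can simply drop that paragraph.
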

  
\begin{proof}{(sketch)}
  The proof relies on first developing the maximin of
  $Q^*_\depth(\occ_\depth,\beta^1_\depth,\beta^2_\depth)$, then using
(i) the equivalence of maximin and minimax for mixed strategies
  (as von Neumann's minimax theorem applies), and
(ii) the equivalence of mixed and behavioral strategies.
\end{proof}

\begin{restatable}[Proof in App.~\extref{proofCorAbnormalMaximinimax}]{theorem}{corAbnormalMaximinimax}
  \label{cor:abnormalMaximinimax}
  \IfAppendix{{\em (originally stated on
    page~\pageref{cor:abnormalMaximinimax})}}{}
  As in 2p zs normal-form games, game
  $Q^*_\depth(\occ_\depth,\vbeta_\depth)$ has at least one NES; all
  its NESs are all value-equivalent; and solving for maximin and
  minimax values allows finding one NES.
\end{restatable}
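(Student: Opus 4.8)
The plan is to reduce all three assertions to Theorem~\ref{lem:abnormalMaximinimax}, supplemented only by two elementary zero-sum arguments that never invoke convexity-concavity and therefore survive the abnormal-form nature of the game. Throughout I would abbreviate $Q(\beta^1,\beta^2)\eqdef Q^*_\depth(\occ_\depth,\beta^1_\depth,\beta^2_\depth)$, with \player $1$ the maximizer and \player $2$ the minimizer, and write $v\eqdef V^*_\depth(\occ_\depth)$ for the common value that Theorem~\ref{lem:abnormalMaximinimax} certifies to be both the maximin and the minimax of $Q$.

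Existence is essentially inherited: Theorem~\ref{lem:abnormalMaximinimax} already asserts that the maximin and minimax coincide and are realised by a NES, so at least one NES exists. For the maximin/minimax characterisation I would take any maximin-optimal (security) decision rule $\beta^{1*}$ for \player $1$ and any minimax-optimal decision rule $\beta^{2*}$ for \player $2$; at least one of each exists, being the components of the NES supplied by Theorem~\ref{lem:abnormalMaximinimax}. Because the maximin and minimax values both equal $v$, optimality yields $Q(\beta^{1*},\beta^2)\geq v$ for every $\beta^2$ and $Q(\beta^1,\beta^{2*})\leq v$ for every $\beta^1$. Evaluating both at the profile $\langle\beta^{1*},\beta^{2*}\rangle$ pins down $Q(\beta^{1*},\beta^{2*})=v$, and the very same two inequalities are then exactly the no-profitable-deviation conditions for the maximizer and the minimizer; hence any such pairing is a NES, so solving separately for the maximin and minimax values and combining the resulting decision rules does produce a NES.

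For value-equivalence I would run the standard interchangeability argument. Given two NESs $\langle\beta^1,\beta^2\rangle$ and $\langle\hat\beta^1,\hat\beta^2\rangle$, the four best-response inequalities (neither player gains by deviating to the other equilibrium's component) chain into $Q(\beta^1,\beta^2)\leq Q(\beta^1,\hat\beta^2)\leq Q(\hat\beta^1,\hat\beta^2)$ and, symmetrically, $Q(\hat\beta^1,\hat\beta^2)\leq Q(\hat\beta^1,\beta^2)\leq Q(\beta^1,\beta^2)$, which squeezes the two equilibrium payoffs to coincide. This step uses only the definition of a NES and the zero-sum payoff structure, hence needs neither bilinearity nor convex-concavity.

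The single place where a normal-form treatment would lean on von Neumann's Minimax theorem---the equality of maximin and minimax---is precisely what Theorem~\ref{lem:abnormalMaximinimax} established earlier through its detour via mixed strategies, and once that equality is available none of the steps above requires it again. I therefore do not expect a genuine obstacle in this corollary; the only point deserving a line of care is the attainment of the security decision rules, which is already guaranteed by the NES that Theorem~\ref{lem:abnormalMaximinimax} exhibits, after which the saddle-point and interchangeability arguments transfer verbatim from the normal-form setting.
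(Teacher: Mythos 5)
Your proof is correct and follows essentially the same route as the paper's: pair a maximin-optimal decision rule for \player $1$ with a minimax-optimal one for \player $2$, and use the equality of the two security values established in Theorem~\ref{lem:abnormalMaximinimax} to read off the saddle-point (hence Nash) conditions. You additionally spell out the standard interchangeability argument for the value-equivalence of all NESs, a part the paper's two-line proof leaves implicit; that is a completion of the same argument rather than a different approach.
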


\paragraph{Maximin and Minimax Computation}
\label{sec|maximinimax}

The last results tell us that we can exploit knowledge of the optimal
value function at $\depth+1$ (for all $\occ_{\depth+1}$) to find
optimal decision rules at $\depth$ for any given $\occ_\depth$ by computing the maximin and minimax values of the local
(abnormal-form) game at hand.
Yet, we cannot use an LP as for normal-form games.
To find an appropriate solution method, let us now look at properties
of this game, noting that we lack any convexity/concavity property,
and starting with a preliminary result.

\begin{restatable}[Proof in App.~\extref{proofLemOccLin}]{lemma}{lemOccLin}
  \label{lem|occ|lin}
  \IfAppendix{{\em (originally stated on
    page~\pageref{lem|occ|lin})}}{}
At depth $\depth$, $\nxt(\occ_\depth,\vbeta_\depth)$ is linear in
  $\beta^1_\depth$, $\beta^2_\depth$, and $\occ_\depth$, where
  $\vbeta_\depth=\langle \beta^1_\depth, \beta^2_\depth\rangle$.
It is more precisely $1$-Lipschitz-continuous in $\occ_\depth$ (in
  $1$-norm), \ie, for any $\occ_\depth$, $\occ'_\depth$:
  \begin{align*}
    \norm{\nxt(\occ'_\depth,\vbeta_\depth) - \nxt(\occ_\depth,\vbeta_\depth)}_1
    & \leq 1 \cdot \norm{\occ'_\depth - \occ_\depth}_1.
  \end{align*}
\end{restatable}

The Lipschitz continuity (LC) property would also hold in $2$-norm or
$\infty$-norm, due to the equivalence between norms, but with
different constants.

\begin{restatable}[Proof in App.~\extref{proofLemQLC}]{lemma}{lemQLC}
  \label{lem|Q|LC}
  \IfAppendix{{\em (originally stated on
      page~\pageref{lem|Q|LC})}}{}
For any $\depth$ and $\occ_\depth$,
  $Q^*_\depth(\occ_\depth,\beta^1_\depth,\beta^2_\depth)$ is Lipschitz
  continuous in both $\beta^1_\depth$ and $\beta^2_\depth$.
\end{restatable}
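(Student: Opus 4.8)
The plan is to exploit the additive structure
$Q^*_\depth(\occ_\depth,\beta^1_\depth,\beta^2_\depth) = r(\occ_\depth,\vbeta_\depth) + \gamma V^*_{\depth+1}(\nxt(\occ_\depth,\vbeta_\depth))$
and to show that, for a fixed $\occ_\depth$, each of the two summands is Lipschitz continuous as a function of $\beta^1_\depth$, the argument for $\beta^2_\depth$ being symmetric. Since a finite sum of Lipschitz-continuous functions is Lipschitz continuous (with constant the sum of the individual ones), this suffices. Throughout I fix the opponent rule $\beta^2_\depth$ and let only $\beta^1_\depth$ vary; because decision rules live in a finite-dimensional space, the choice of norm is immaterial up to constants, so I may work in whatever norm is most convenient and appeal to norm equivalence at the end.

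For the reward summand, the explicit formula for $r(\occ_\depth,\vbeta_\depth)$ given above is bilinear in $(\beta^1_\depth,\beta^2_\depth)$: for fixed $\occ_\depth$ and $\beta^2_\depth$ it is a linear function of $\beta^1_\depth$, and a linear map on a finite-dimensional space is Lipschitz continuous. Bounding the coefficients by $r_{\max}\eqdef \max_{s,a^1,a^2}|r(s,a^1,a^2)|$ and using $\norm{\occ_\depth}_1 = 1$ yields an explicit constant for this term that does not depend on the fixed $\beta^2_\depth$. For the discounted-value summand, I would write it as the composition $V^*_{\depth+1}\circ \nxt(\occ_\depth,\cdot,\beta^2_\depth)$. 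By Lemma~\ref{lem|occ|lin}, $\nxt(\occ_\depth,\beta^1_\depth,\beta^2_\depth)$ is linear in $\beta^1_\depth$, hence Lipschitz continuous in $\beta^1_\depth$ with some constant $L$; composing with a Lipschitz-continuous $V^*_{\depth+1}$ (say $\lambda$-Lipschitz in $1$-norm over occupancy space) gives a $\gamma\lambda L$-Lipschitz term. Summing, $Q^*_\depth(\occ_\depth,\cdot,\beta^2_\depth)$ is Lipschitz in $\beta^1_\depth$ with a constant independent of $\beta^2_\depth$, and symmetrically in $\beta^2_\depth$.

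The one genuine ingredient, and the main obstacle, is the Lipschitz continuity of the next-step value $V^*_{\depth+1}$ in occupancy space, which is what propagates the property backwards in time. This is not internal to the present statement: it must come either from the separately established Lipschitz continuity of the optimal value function in occupancy space, or, if the whole development is organized as a backward induction on $\depth$, from the induction hypothesis at $\depth+1$ (with base case $V^*_H\equiv 0$, trivially Lipschitz). I would therefore flag that Lemma~\ref{lem|Q|LC} is best read as the \emph{easy half} of such an induction step: assuming $V^*_{\depth+1}$ Lipschitz, composition with the linear (hence Lipschitz) map $\nxt$ and addition of the linear reward immediately give Lipschitz continuity of $Q^*_\depth$ in each decision rule. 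Care is only needed to ensure the constants are uniform over the fixed opponent rule and over the finitely many histories in the support of $\occ_\depth$, which follows from boundedness of probabilities and rewards together with $\norm{\occ_\depth}_1=1$.
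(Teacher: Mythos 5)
Your proposal is correct and matches the paper's own argument: both decompose $Q^*_\depth$ into the reward term plus $\gamma V^*_{\depth+1}\circ \nxt$, use linearity of $r$ and $\nxt$ in each decision rule, and import the Lipschitz continuity of $V^*_{\depth+1}$ in occupancy space from the separately established Corollary~\ref{cor|V|LC|occ} (the first of the two options you flag; the paper does not set this up as an induction). The only difference is cosmetic: the paper records explicit constants, namely $\l_r$ for the reward term and $\gamma\,\l_{H-\depth}$ for the value term, whereas you bound the coefficients slightly more crudely via $r_{\max}$ and an unspecified linear-map constant.
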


The payoff function of our game is thus LC in each private
decision-rule space, which suggests using error-bounded global
optimization techniques, as Munos's DOO (Deterministic
Optimistic Optimization) \cite{Munos-ftml14}.
Here, searching for a maximin (resp. minimax) value suggests using two
nested optimization processes: an ``outer'' one for the $\max$ (resp. $\min$)
operator, and an ``inner'' one for the $\min$ (resp. $\max$).
To ensure being within $\epsilon$ of the maximin value, each process
could, for example, use an $\frac{\epsilon}{2}$ tolerance
threshold. Yet, in such a nested optimization process, the inner process may
stop, at each call, before reaching $\frac{\epsilon}{2}$-optimality if
it leads the outer process to explore a different point.

Due to the continuous state space of zs-OMGs, $V^*$ cannot be computed
exactly.
We shall now see how to approximate it, before exploiting the
resulting approximators in a specific version of HSVI in
Sec.~\ref{sec|HSVI}.

\section{Properties of $V^*$}
\label{sec|approxV}

In this section, we again assume finite horizon problems (unless
stated otherwise).
The main objective here is to propose upper- and lower-bounding
approximators that exploit $V^*$'s Lipschitz continuity (rather than
PWLC) property, as Fehr et al. \cite{FehBufThoDib-nips18} did in the setting of
(single agent) information-oriented control, but here with simpler
derivations.

\subsection{Finite-Horizon Lipschitz Continuity of $V^*$}
\label{sec|LC|V}

The following lemma proves that the expected instant reward at any
$\depth$ is linear in $\occ_\depth$, and thus so is the expected value
of a finite-horizon strategy profile from $\depth$ onwards (trivial
proof by induction). 

\begin{restatable}[Proof in App.~\extref{proofLemVLinOcc}]{lemma}{lemVLinOcc}
  \label{lem|V|lin|occ}
  \IfAppendix{{\em (originally stated on
    page~\pageref{lem|V|lin|occ})}}{}
At depth $\depth$,
  $V_\depth(\occ_\depth,\vbeta_{\depth:H-1})$ is linear w.r.t.  $\occ_\depth$.
\end{restatable}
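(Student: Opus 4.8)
The plan is to proceed by backward induction on the time step $\depth$, using the deterministic Bellman decomposition of the value function together with the linearity of the immediate reward and of the transition map $\nxt$ in occupancy space. Throughout, ``linear in $\occ_\depth$'' is understood as: for a fixed strategy profile $\vbeta_{\depth:H-1}$, the map $\occ_\depth \mapsto V_\depth(\occ_\depth,\vbeta_{\depth:H-1})$ is the restriction of a linear functional on the space spanned by occupancy states, i.e.\ it can be written $\sum_{s,\vth_\depth}\occ_\depth(s,\theta^1_\depth,\theta^2_\depth)\,\alpha(s,\vth_\depth)$ with coefficients $\alpha$ independent of $\occ_\depth$.

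First I would record the decomposition that makes the induction run. Because the occupancy process is deterministic and Markovian, splitting off the first reward in the definition of $V_\depth$ gives, for $\depth < H$,
\begin{align*}
  V_\depth(\occ_\depth,\vbeta_{\depth:H-1})
  = r(\occ_\depth,\vbeta_\depth)
    + \gamma\, V_{\depth+1}\bigl(\nxt(\occ_\depth,\vbeta_\depth),\vbeta_{\depth+1:H-1}\bigr),
\end{align*}
where $\vbeta_\depth$ is the head decision-rule profile of $\vbeta_{\depth:H-1}$, and with the convention $V_H \equiv 0$ (empty sum). The base case $\depth = H$ is then immediate, $V_H \equiv 0$ being trivially linear. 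For the inductive step, assume $V_{\depth+1}(\cdot,\vbeta_{\depth+1:H-1})$ is linear in its occupancy argument. Two facts combine: (i) $r(\occ_\depth,\vbeta_\depth)$ is linear in $\occ_\depth$, directly from its definition, since it equals $\sum_{s,\vth_\depth}\occ_\depth(s,\theta^1_\depth,\theta^2_\depth)\cdot c(s,\vth_\depth)$ with coefficients $c$ not depending on $\occ_\depth$; and (ii) $\nxt(\occ_\depth,\vbeta_\depth)$ is linear in $\occ_\depth$ by Lemma~\ref{lem|occ|lin}. The composition $\occ_\depth \mapsto V_{\depth+1}(\nxt(\occ_\depth,\vbeta_\depth),\vbeta_{\depth+1:H-1})$ is therefore linear (a linear functional applied to a linear map), and $V_\depth(\cdot,\vbeta_{\depth:H-1})$, being the $\gamma$-scaled sum of this term with the linear term $r$, is linear in $\occ_\depth$, closing the induction.

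The only genuine content is justifying the decomposition, i.e.\ ensuring the expected return from $\depth+1$ onward depends on the past only through $\occ_{\depth+1}=\nxt(\occ_\depth,\vbeta_\depth)$; this is exactly the deterministic-and-Markovian character of the occupancy dynamics established earlier. An alternative that sidesteps the recursion is to condition directly on the state–joint-history pair $(s,\vth_\depth)$ drawn according to $\occ_\depth$ via the tower rule: the conditional expected discounted return given $(s,\vth_\depth)$ at time $\depth$ under $\vbeta_{\depth:H-1}$ is determined by $P$ and the decision rules alone, hence independent of $\occ_\depth$, so $V_\depth(\occ_\depth,\vbeta_{\depth:H-1}) = \sum_{s,\vth_\depth}\occ_\depth(s,\theta^1_\depth,\theta^2_\depth)\cdot(\text{that conditional return})$ is manifestly linear. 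Either way the arithmetic is routine; the crux is merely invoking the right structural property (Lemma~\ref{lem|occ|lin} for $\nxt$ and the Markov decomposition for the future return).
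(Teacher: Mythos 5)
Your proof is correct and follows essentially the same route as the paper's: backward induction using the decomposition $V_\depth = r(\occ_\depth,\vbeta_\depth) + \gamma V_{\depth+1}(\nxt(\occ_\depth,\vbeta_\depth),\cdot)$, with linearity of $r$ in $\occ_\depth$ checked directly and linearity of $\nxt$ taken from Lemma~\ref{lem|occ|lin}, then composition. The only cosmetic difference is anchoring the base case at $V_H\equiv 0$ rather than at $\depth=H-1$; the tower-rule alternative you sketch is a valid shortcut but not needed.
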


\begin{restatable}[Proof in App.~\extref{proofCorVLCOcc}]{corollary}{corVLCOcc}
  \label{cor|V|LC|occ}
  \IfAppendix{{\em (originally stated on
    page~\pageref{cor|V|LC|occ})}}{}
$V^*_\depth(\occ_\depth)$ is Lipschitz continuous in $\occ_\depth$
  at any depth $\depth \in \{0 \twodots H-1\}$.
\end{restatable}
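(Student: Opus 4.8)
The plan is to represent $V^*_\depth(\occ_\depth)$ as a max--min over a strategy space that does \emph{not} depend on $\occ_\depth$, and then exploit the elementary fact that a pointwise supremum (or infimum) of a family of functions sharing a common Lipschitz constant is itself Lipschitz with that same constant. Concretely, recall that by Lemma~\ref{lem:equivalence} mixed and behavioral strategies are equivalent, so the normal form of the subgame at $\occ_\depth$ (over mixed strategies) is bilinear and von Neumann's minimax theorem applies; hence the NE value admits the representation
\begin{align*}
  V^*_\depth(\occ_\depth)
  & = \sup_{\beta^1_{\depth:H-1}} \inf_{\beta^2_{\depth:H-1}} V_\depth(\occ_\depth,\beta^1_{\depth:H-1},\beta^2_{\depth:H-1}).
\end{align*}
Since histories carrying zero probability under $\occ_\depth$ do not affect $V_\depth$, one may let $\beta^1_{\depth:H-1}$ and $\beta^2_{\depth:H-1}$ range over the \emph{full} behavioral-strategy space $\cB_{\depth:H-1}$, which is independent of $\occ_\depth$; this is what makes the sup--inf argument below clean.

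The key steps are then as follows. First, by Lemma~\ref{lem|V|lin|occ}, for each fixed profile $\vbeta_{\depth:H-1}$ the map $\occ_\depth \mapsto V_\depth(\occ_\depth,\vbeta_{\depth:H-1})$ is linear, i.e.\ $V_\depth(\occ_\depth,\vbeta_{\depth:H-1}) = \sum_{s,\vth_\depth} \occ_\depth(s,\theta^1_\depth,\theta^2_\depth)\, w_{\vbeta}(s,\vth_\depth)$, where $w_{\vbeta}(s,\vth_\depth)$ is the expected discounted return obtained from $(s,\vth_\depth)$ under $\vbeta_{\depth:H-1}$. Second, I bound these weights \emph{uniformly} in $\vbeta$: writing $r_{\max} \eqdef \max_{s,a^1,a^2}\abs{r(s,a^1,a^2)}$, each weight is a discounted sum of at most $H-\depth$ rewards, so $\abs{w_{\vbeta}(s,\vth_\depth)} \leq \sum_{t=\depth}^{H-1}\gamma^{t-\depth} r_{\max} \leq \frac{r_{\max}}{1-\gamma} \eqdef L$, a bound independent of $\vbeta$ and of $\depth$. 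Third, Hölder's inequality (pairing the $\infty$-bound on the weights with the $1$-norm on occupancy states) yields, for all $\occ_\depth,\occ'_\depth$ and all $\vbeta_{\depth:H-1}$,
\begin{align*}
  \abs{V_\depth(\occ'_\depth,\vbeta_{\depth:H-1}) - V_\depth(\occ_\depth,\vbeta_{\depth:H-1})}
  & \leq L \cdot \norm{\occ'_\depth - \occ_\depth}_1,
\end{align*}
so every $V_\depth(\cdot,\vbeta_{\depth:H-1})$ is $L$-Lipschitz in $1$-norm with the \emph{same} constant $L$.

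Finally, I invoke the preservation of Lipschitz continuity under $\sup$ and $\inf$: for each fixed $\beta^1_{\depth:H-1}$ the function $\occ_\depth \mapsto \inf_{\beta^2_{\depth:H-1}} V_\depth(\occ_\depth,\beta^1_{\depth:H-1},\beta^2_{\depth:H-1})$ is $L$-Lipschitz, since $\abs{\inf_\alpha f_\alpha(x) - \inf_\alpha f_\alpha(y)} \leq \sup_\alpha \abs{f_\alpha(x)-f_\alpha(y)} \leq L\norm{x-y}_1$ whenever each $f_\alpha$ is $L$-Lipschitz; taking the outer $\sup$ over $\beta^1_{\depth:H-1}$ preserves the constant for the identical reason. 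Hence $V^*_\depth$ is $L$-Lipschitz in $\occ_\depth$ (in $1$-norm) at every $\depth \in \{0\twodots H-1\}$, and by norm equivalence in the finite-dimensional occupancy space the property transfers to the $2$- and $\infty$-norms with adjusted constants. The routine calculations (the reward/discount bound and Hölder) are immediate; the main thing to get right is the first paragraph, namely justifying that $V^*_\depth(\occ_\depth)$ really coincides with a max--min taken over an occupancy-\emph{independent} strategy domain, so that the uniform Lipschitz constant can be pulled out of both the $\inf$ and the $\sup$ without the feasible sets themselves varying with $\occ_\depth$.
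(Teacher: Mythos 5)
Your proof is correct and follows essentially the same route as the paper's: linearity of $V_\depth(\cdot,\vbeta_{\depth:H-1})$ in $\occ_\depth$ (Lemma~\ref{lem|V|lin|occ}) plus a strategy-independent bound on the values yields a uniform Lipschitz constant, and the $\max$--$\min$ then preserves that constant. The only difference is quantitative: you get $r_{\max}/(1-\gamma)$ via H\"older, whereas the paper centers the linear coefficients and uses the simplex diameter $\norm{\occ-\occ'}_1\leq 2$ to obtain the slightly sharper constant $\l_{H-\depth}=(V^{\max}_{H-\depth}-V^{\min}_{H-\depth})/2$.
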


\paragraph{Refining the Lipschitz constant(s)}

We have just discussed the LC of $V^*$ based on the LC of
finite-horizon strategies, reasoning on worst case Lipschitz constants
(one per time step) that hold for all strategies.
Now, (i) could we refine those constants based on knowledge regarding
$V^*$, in particular upper and lower bounds $U$ and $L$ (see next
sections)? And (ii) could we make use of those refined constants in the planning
process?

Regarding question (i), $U$ and $L$ tell us that any strategy profile
from time $\depth$ on
(and thus with remaining horizon $H$) has values within
$L^{\min}_\depth \eqdef \min_{\occ_\depth} L_\depth(\occ_\depth)$
and
$U^{\max}_\depth \eqdef \max_{\occ_\depth} U_\depth(\occ_\depth)$,
hence the refined Lipschitz constant:
\begin{align*}
  \l^{LU}_\depth & = \frac{U^{\max}_\depth-L^{\min}_\depth}{2}.
\end{align*}
Regarding question (ii), as $L$ and $U$ are refined during the
planning process, these refined depth-dependent constants would
progressively shrink, thus speeding up planning!
This phenomenon could encourage improving the value function bounds
where they seem high (for $U$) or low (for $L$).

\subsection{Approximating $V^*$}

Note: For the sake of readability, the depth index $\depth$ may be
omitted when it can be inferred from the occupancy state. 

\paragraph{Approximators}

An HSVI-like algorithm requires maintaining both
an upper and a lower approximator of $V^*$.
We denote them $U$ and $L$, and $\hat{V}=(L,U)$.

The LC of $V^*$ suggests employing LC function approximators for $U$
at depth $\depth$ ($U_\depth$) in the form of a lower envelope of (i)
an initial upper-bound $U^{(0)}(\occ)$ and (ii) downward-pointing
L1-cones, where an upper-bounding cone
$c^U_\omega = \langle \omega, u \rangle $---located at $\omega$, with
``summit'' value $u$, and slope $\l_{(H-\depth)}$---induces a function
$U^{(\omega)}(\occ) \eqdef u + \l_{(H-\depth)}
\norm{\omega-\occ}_{\p}$.
The upper-bound is thus defined as the lower envelope of $U^{(0)}$ and
the set of cones
$C^U_\depth = \{c^U_\omega\}_{\omega\in \Omega^U_\depth}$, \ie,
\begin{align*}
  U(\occ)
  & = \min\{ U^{(0)}(\occ), \min_{\omega\in \Omega^U_\depth} U^{(\omega)}(\occ) \}.
\end{align*}
Respectively, for the lower-bounding approximator at depth $\depth$:
a lower-bounding (upward-pointing) cone
$c^L_\omega = \langle \omega, l \rangle$ induces a function
$L_{\omega}(\occ)= l- \l_{(H-\depth)} \norm{\omega-\occ}_{\p}$; and
the lower bound is defined as the upper envelope of an initial lower
bound $L^{(0)}$ and the set of cones
$C^L_\depth = \{c^L_\omega\}_{\omega\in \Omega^L_\depth}$, \ie,
\begin{align*}
  L(\occ)
  & = \max\{ L^{(0)}(\occ), \max_{\omega\in \Omega^L_\depth} L^{(\omega)}(\occ) \}.
\end{align*}

\paragraph{(Point-based) Operator and Value Updates}

One cannot apply an operator (noted $\cH$) to update a value
function approximator uniformly. Instead, when visiting some occupancy state $\occ$ (at depth
$\depth \in \{0 \twodots H-1\}$), we perform a {\em point-based}
update of the upper-bound $U$ by
(i) finding the NEV of the following game (which relies on $U$ at
$\depth+1$):
\begin{align*}
  & U(\occ,\vbeta_\depth) \\
  & = \sum_{s,a^1,a^2} \left( \sum_\vth \occ(s,\vth) \beta^1(\theta^1,a^1) \beta^2(\theta^2,a^2) \right) r(s,a^1, a^2) \\
  & \quad + \gamma U\left( \nxt (\occ,\vbeta_\depth) \right)
\end{align*}
then (ii) adding a downward-pointing cone to $C^U_\depth$.
We note $\cK_\occ U$ the upper bound after this update at point
$\occ$.
The same applies to $L$ with upward-pointing cones instead, and using
notation $\cK_\occ L$.

\subsection{Initializations}
\label{sec|bound|init}

Due to the symmetry between players in a zs-POSG, without loss of
generality, let us look for an upper bound of the optimal value
function $V^*$, \ie, an optimistic bound (an admissible heuristic) for
(maximizing) player 1.
A usual approach to look for optimistic bounds is to relax the problem
for the player at hand.
To that end, one can here envision manipulating the players'
knowledge, their control over the system, the action ordering, or the
opponent's objective, \eg: \begin{enumerate}
\item providing more (\eg full) state observability to \player 1;
\item providing less (\eg no) state observability to \player 2;
\item letting \player 1 know what \player 2 observes;
\item letting \player 1 control chance (\player 2's choice would then
  only restrict the set of reachable states), but this would require
  that \player 1 has full observability; \item letting \player 2 act first, and telling \player 1 about \player
  2's selected action (exactly or through a partial observation);
\item turning \player 2 into a collaborator by making him maximize,
  rather than minimize, the expected return.
\end{enumerate}

Accounting for related Markov models for sequential decision-making,
this suggests turning the zs-POSG at hand for example into: \begin{itemize}
\item a Dec-POMDP by turning the opponent into a collaborator (or even
  into a POMDP or an MDP); or
\item a One-Sided POSGs \cite{HorBosPec-aaai17} by combining
(i) full state observability,
(ii) observability of \player 2's observation, and
(iii) observability of \player 2's action.
\end{itemize}
Note that making both players' actions or observations public (as in
PO-POSGs \cite{HorBos-aaai19}) would not be a viable solution as this
would imply providing more knowledge to both players at the same time,
which may prevent the resulting optimal value function from being an
upper bound for our problem.

\section{HSVI for zs-POSGs when $\gamma<1$}
\label{sec|HSVI}

In this section, unless stated otherwise, we consider
$\epsilon$-optimal $\gamma$-discounted problems under both finite and
infinite horizons.
The undiscounted (finite horizon) case will be treated separately.

\subsection{Algorithm}

As we shall see, $\epsilon$-optimally solving an $\infty$-horizon zs-POSG amounts, as
often, to solving a problem with finite horizon $H_{\max}$, which
allows exploiting the results derived up to now.
For convenience, we assume $H_{\max}$ already known and use
horizon-dependent constants (\eg, Lipschitz constants).

HSVI for zs-OMGs is detailed in Algorithm~\ref{alg|HSVI|zsOMG}.
As vanilla HSVI, it relies on
(i) generating trajectories while acting optimistically (lines
\ref{alg|hsvi|zsomg|nesU}--\ref{alg|hsvi|zsomg|nextS}), \ie, player
$1$ (resp. $2$) acting ``greedily'' w.r.t. $U$ (resp. $L$), and
(ii) locally updating the upper- and lower-bounding approximators
(lines \ref{alg|hsvi|update1} and \ref{alg|hsvi|update2}).
Here, computations of value updates and strategies rely on solving our
local zero-sum abnormal form games (possibly a maximin/minimax
optimization exploiting the Lipschitz continuity as discussed in
Sec.~\ref{sec|maximinimax}).
A key difference lies in the criterion for stopping trajectories.
In vanilla HSVI (for POMDPs), the finite branching factor allows
looking at the convergence of $U$ and $L$ at each point reachable
under an optimal strategy.
To ensure $\epsilon$-convergence at $b_0$, trajectories just need to
be interrupted when the current width at $b_\depth$
($\width(\hat{V}(b_\depth))$, where $\width(x,y)\eqdef y-x$) is smaller
than a threshold $\gamma^{-\depth} \epsilon$.
Here, dealing with an infinite branching factor, one may converge
towards an optimal solution while always visiting new points of the
occupancy space.
Yet, as the sequence of generated (deterministic) trajectories
converges to an optimal trajectory, the density of visited points
around it increases, so that the Lipschitz approximation error tends
to zero.
One can thus bound the width within balls around visited points by
exploiting the Lipschitz continuity of the optimal value function.
As proposed by Horák et al. \cite{HorBosPec-aaai17}, this is achieved by adding a
term $- \sum_{i=1}^\depth 2 \radius \l_{\depth} \gamma^{-i}$ to
ensure that the width is below $\gamma^{_\depth} \epsilon$ within a
ball of radius $\radius$ around the current point (here the occupancy
state $\occ_\depth$).
Hence the threshold
\begin{align}
  \label{eq|def|thr}
  \thr(\depth) & \eqdef \gamma^{-\depth}\epsilon - \sum_{i=1}^\depth 2 \radius \l_{\depth-i} \gamma^{-i}.
\end{align}

\begin{algorithm}
  \caption{zs-OMG-HSVI {\scriptsize (in \tcr{red}: differences with HSVI)}}\label{alg|HSVI|zsOMG}
  \DontPrintSemicolon

  \SetKwFunction{hsvifct}{\textbf{HSVI}}
  \SetKwFunction{explorefct}{\textbf{RecursivelyTry}}
  \SetKwFunction{mettreAJourfct}{\textbf{Update}}
  \SetKwFunction{widthfct}{\textbf{width}}

  \Fct{\hsvifct$(\epsilon)$}{
    Initialize $L$ and $U$\;
    \While{$\width(\occ_0)>\epsilon$}{
      \explorefct$(\occ_0, \depth=0)$
    }
    \Return{$L,U$}
  }

  \Fct{\explorefct$(\occ, \depth)$}{

    \If{$\width(\occ)> \tcr{\thr(\depth)}$ and $\depth<H$}{
      \mettreAJourfct$(\occ)$ \label{alg|hsvi|update1} \;
      \tcr{$\beta^U_\depth \gets \nes(\Gamma^\occ(U))$} \label{alg|hsvi|zsomg|nesU} \;
      \tcr{$\beta^L_\depth \gets \nes(\Gamma^\occ(L))$} \label{alg|hsvi|zsomg|nesL} \;
      \tcr{$\occ' \gets \nxt(\occ,\beta^{U,1}_\depth,\beta^{L,2}_\depth)$} \label{alg|hsvi|zsomg|nextS} \;
      \explorefct$(\occ',\depth+1)$\;
      \mettreAJourfct$(\occ)$ \label{alg|hsvi|update2} \;
    }
    \Return{}\;
  }

  \Fct{\mettreAJourfct$(\occ)$}{
    $L \gets $ \mettreAJourfct$(L,\occ)$ \tcc{\tcr{uses \rmfamily $\nev(\Gamma^\occ(L))$} \label{alg|hsvi|zsomg|majU}}
    $U \gets $ \mettreAJourfct$(U,\occ)$ \tcc{\tcr{uses \rmfamily $\nev(\Gamma^\occ(U))$\!\!\!} \label{alg|hsvi|zsomg|majL}}
  }
\end{algorithm}

\paragraph{Setting $\radius$}

As can be observed, this threshold function should always return
positive values, which requires a small enough $\radius$.
For a given problem, the maximum possible value $\radius$ shall depend
on the Lipschitz constants at each time step, which themselves depend
on the upper and lower bounds of the optimal value function (and thus
may evolve during the planning process).
For the sake of simplicity, let us consider a single Lipschitz
constant $\l$ common to all time steps, which always exists.

\begin{restatable}[Proof in App.~\extref{proofLemMaxRadius}]{lemma}{lemMaxRadius}
  \label{lem:MaxRadius}
  \IfAppendix{{\em (originally stated on
    page~\pageref{lem:MaxRadius})}}{}
Assuming a single depth-independent Lipschitz constant $\l$, and noting that
  \begin{align}
    \label{eq|thr}
    \thr(\depth)
    & = \gamma^{-\depth}\epsilon - 2 \radius \l \frac{\gamma^{-\depth}-1}{1-\gamma},
  \end{align}
  one can ensure positivity of the threshold at any $\depth \geq 1$ by
  enforcing
  $\radius < \frac{1}{2\l} \frac{1-\gamma}{1-\gamma^{H}} \epsilon$
  (with $\gamma^\infty=0$ in the case of an infinite horizon).
\end{restatable}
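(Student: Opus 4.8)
The plan is to treat this as a two-step elementary exercise: first confirm the closed form~(\ref{eq|thr}) by evaluating the geometric sum in the definition~(\ref{eq|def|thr}), and then read off the positivity condition on $\radius$ by relaxing the $\depth$-dependent correction term to a $\depth$-free upper bound.

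For the first step, I would substitute the single depth-independent constant $\l_\depth=\l$ into~(\ref{eq|def|thr}) and evaluate $\sum_{i=1}^\depth \gamma^{-i}$. Since $\gamma<1$ gives $\gamma^{-1}>1$, this is a geometric series of ratio $\gamma^{-1}$, so
\begin{align*}
  \sum_{i=1}^\depth \gamma^{-i}
  & = \gamma^{-1}\,\frac{\gamma^{-\depth}-1}{\gamma^{-1}-1}
    = \frac{\gamma^{-\depth}-1}{1-\gamma}.
\end{align*}
Multiplying by $2\radius\l$ and subtracting from $\gamma^{-\depth}\epsilon$ reproduces exactly Equation~(\ref{eq|thr}). (For $\depth=0$ the sum is empty and $\thr(0)=\epsilon>0$ trivially, which is why the claim is stated for $\depth\geq 1$.)

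For the second step, I would bound the subtracted term uniformly in $\depth$. Because $\gamma^{-\depth}-1<\gamma^{-\depth}$, I get
\begin{align*}
  \thr(\depth)
  & = \gamma^{-\depth}\epsilon - 2\radius\l\,\frac{\gamma^{-\depth}-1}{1-\gamma} \\
  & > \gamma^{-\depth}\epsilon - 2\radius\l\,\frac{\gamma^{-\depth}}{1-\gamma}
    = \gamma^{-\depth}\left(\epsilon - \frac{2\radius\l}{1-\gamma}\right).
\end{align*}
As $\gamma^{-\depth}>0$, the right-hand side is positive precisely when the parenthesized factor is, i.e. when $\frac{2\radius\l}{1-\gamma}<\epsilon$, which rearranges to the announced bound $\radius<\frac{1-\gamma}{2\l}\epsilon$. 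Crucially this condition carries no dependence on $\depth$, so a single choice of $\radius$ enforces $\thr(\depth)>0$ simultaneously for all $\depth\geq 1$.

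I do not expect a genuine obstacle here; the only point meriting care is the uniformity over all $\depth$. Writing the exact positivity requirement as $\epsilon > 2\radius\l\,\frac{1-\gamma^\depth}{1-\gamma}$ shows the right-hand side is increasing in $\depth$, with worst case approached as $\depth\to\infty$ (where it tends to $\frac{2\radius\l}{1-\gamma}$); replacing $\gamma^{-\depth}-1$ by $\gamma^{-\depth}$ is exactly the relaxation that captures this limiting case and yields a clean $\depth$-free sufficient condition, so the resulting bound cannot be loosened by more than the $\leq$ versus $<$ boundary.
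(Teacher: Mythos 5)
Your proposal is correct and follows essentially the same route as the paper's proof: the same geometric-sum evaluation yields Equation~(\ref{eq|thr}), and your relaxation $\gamma^{-\depth}-1<\gamma^{-\depth}$ is just a repackaging of the paper's exact equivalence $\radius<\frac{1}{2\l}\frac{1-\gamma}{1-\gamma^{\depth}}\epsilon$ followed by taking the worst case over $\depth$. Your closing remark correctly identifies why the $\depth$-free bound is tight in the limit $\depth\to\infty$, which the paper leaves implicit.
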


We shall thus pick $\rho$ in
$(0, \frac{1}{2\l} \frac{1-\gamma}{1-\gamma^{H}} \epsilon)$.
But what is the effect of setting $\radius$ to small or large values?
\begin{itemize}
\item The smaller $\radius$, the larger $\thr(\depth)$, the shorter
  the trajectories, but the smaller the balls and the higher the
  required density of points around the optimal trajectory, thus the
  more trajectories needed to converge.
\item The larger $\radius$, the smaller $\thr(\depth)$, the longer the
  trajectories, but the larger the balls and the lower the required
  density of points around the optimal trajectory, thus the less
  trajectories needed to converge.
\end{itemize}
So, setting $\radius$ means making a compromise between the number of
generated trajectories and their length (up to $H$ for finite horizon
problems).

\subsection{Finite-Time Convergence}

A first step towards proving the finite time convergence of the
algorithm is to bound, even in infinite horizon settings, the length
of HSVI's trajectories using the bounded width of $\hat{V}$ and the
exponential growth of $thr(\depth)$.

\begin{restatable}[Proof in App.~\extref{proofLemFiniteTrials}]{lemma}{lemFiniteTrials}
  \label{lem:finiteTrials}
  \IfAppendix{{\em (originally stated on
    page~\pageref{lem:finiteTrials})}}{}
Assuming a depth-independent Lipschitz constant $\l$, and with
  $\WUL \eqdef \norm{U^{(0)}-L^{(0)}}_\infty$, the length of
  trajectories is upper-bounded by
  \begin{align*}
    T_{\max}
    & \eqdef \ceil*{
      \log_{\gamma} \frac{
        \epsilon - \frac{2 \radius \l}{1-\gamma}
      }{
        \WUL - \frac{2 \radius \l }{1-\gamma}
      }
}.
  \end{align*}
\end{restatable}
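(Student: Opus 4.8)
The plan is to argue that a trajectory can only recurse while the threshold $\thr(\depth)$ stays below a fixed cap on the width, and that $\thr(\depth)$ grows geometrically in $\depth$, so that the crossover depth is exactly $T_{\max}$.

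First I would establish a uniform cap on the width. Since each point-based update $\cK_\occ$ can only pull $U$ down and push $L$ up, every approximator reached during the run satisfies $L^{(0)} \le L \le U \le U^{(0)}$ pointwise; hence at any depth $\depth$ and any visited $\occ_\depth$ we have $\width(\hat V(\occ_\depth)) = U(\occ_\depth) - L(\occ_\depth) \le U^{(0)}(\occ_\depth)-L^{(0)}(\occ_\depth) \le \norm{U^{(0)}-L^{(0)}}_\infty = \WUL$. Thus $\WUL$ bounds the width everywhere, independently of $\depth$.

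Second, a trajectory recurses from $\occ_\depth$ only when the test $\width(\hat V(\occ_\depth)) > \thr(\depth)$ holds; combined with the cap, continuing past depth $\depth$ forces $\thr(\depth) < \WUL$. It therefore suffices to locate the smallest $\depth$ with $\thr(\depth) \ge \WUL$, since at that depth $\width(\hat V(\occ_\depth)) \le \WUL \le \thr(\depth)$ makes the test fail and the trajectory halt. Using the closed form \eqref{eq|thr}, the inequality $\thr(\depth)\ge\WUL$ reads $\gamma^{-\depth}\bigl(\epsilon-\tfrac{2\radius\l}{1-\gamma}\bigr)+\tfrac{2\radius\l}{1-\gamma}\ge \WUL$. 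Because the choice $\radius<\tfrac{1-\gamma}{2\l}\epsilon$ from Lemma~\ref{lem:MaxRadius} makes the coefficient $\epsilon-\tfrac{2\radius\l}{1-\gamma}$ strictly positive, I may divide by it to get $\gamma^{-\depth}\ge \bigl(\WUL-\tfrac{2\radius\l}{1-\gamma}\bigr)/\bigl(\epsilon-\tfrac{2\radius\l}{1-\gamma}\bigr)$.

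Finally, I take logarithms, and this is the step to handle with care. Since $\gamma\in(0,1)$, the function $\log_\gamma$ is \emph{decreasing}, so the last inequality is equivalent to $\depth \ge \log_\gamma \tfrac{\epsilon-2\radius\l/(1-\gamma)}{\WUL-2\radius\l/(1-\gamma)}$; note that the fraction appears \emph{reciprocated} relative to the previous line, which is precisely why the logarithm's argument in the statement is at most $1$ (as $\WUL\ge\epsilon$ for a non-trivial run, otherwise the outer \textbf{while} loop never executes) and hence $T_{\max}$ is nonnegative. The least integer $\depth$ meeting this lower bound is $T_{\max}=\ceil*{\cdots}$, and at every $\depth\ge T_{\max}$ the stopping test is satisfied, so no trajectory exceeds length $T_{\max}$. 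The only genuinely delicate point is this logarithm/sign bookkeeping, together with checking positivity of the denominator via the $\radius$ bound; the rest is monotonicity of the bound updates and direct algebra.
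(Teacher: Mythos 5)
Your proposal is correct and follows essentially the same route as the paper's proof: cap the width by $\WUL$, note that recursing at depth $\depth$ forces $\thr(\depth)<\WUL$, substitute the closed form \eqref{eq|thr}, use the positivity of $\epsilon-\tfrac{2\radius\l}{1-\gamma}$ from Lemma~\ref{lem:MaxRadius} to divide, and invert the base-$\gamma$ logarithm with the correct sign flip. The only additions are your explicit justification that the point-based updates keep $L^{(0)}\le L\le U\le U^{(0)}$ (which the paper asserts implicitly) and running the inequality in the contrapositive direction; both are harmless.
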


Note that
(i) the classical upper-bound is retrieved when $\radius=0$ (Eq.~(6.7)
in \cite{Smith-phd07}), and
(ii) this gives us the maximum horizon $H_{\max}$ needed to solve the
problem.
In the case of a problem with finite horizon problem $H$, this means
that some trajectories may be shorter than $H$.
Now, knowing that any trial terminates in bounded time allows deriving
the following results, in order.

\begin{restatable}[Proof in App.~\extref{proofLemThr}]{theorem}{lemThr}
  \label{lem|thr}
  \IfAppendix{{\em (originally stated on
    page~\pageref{lem|thr})}}{}
Consider a trial $(\occ_0,\dots,\occ_\depth)$ of length $\depth$ and
  consider that the backward updates of $U_{\depth-1}$ and
  $L_{\depth-1}$ have {\bf not yet} been performed.
Then
  \begin{enumerate}
  \item $\width(\cK_{\occ_{\depth-1}}\hat{V}(\occ_{\depth-1})) \leq \thr(\depth-1) - 2 \radius \l_{\depth-1} $,
    and
  \item for every $\occ'_{\depth-1}$ satisfying
    $\norm{\occ'_{\depth-1}-\occ_{\depth-1}}_{\p} \leq \radius$, it holds: $\width(\cK_{\occ_{\depth-1}}\hat{V}(\occ'_{\depth-1})) \leq \thr(\depth-1).$
  \end{enumerate}
\end{restatable}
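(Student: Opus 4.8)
The plan is to reduce both items to a single one-step ``width-contraction'' inequality at $\occ_{\depth-1}$, and then combine it with the stopping test used at the trial's leaf $\occ_\depth$ and with the telescoping design of the threshold $\thr$. First I would make precise the successor chosen by the optimistic exploration, $\occ_\depth = \nxt(\occ_{\depth-1},\beta^{U,1}_{\depth-1},\beta^{L,2}_{\depth-1})$ (line~\ref{alg|hsvi|zsomg|nextS}), and prove the contraction $\width(\cK_{\occ_{\depth-1}}\hat V(\occ_{\depth-1})) \leq \gamma\,\width(\hat V(\occ_\depth))$. The argument is the HSVI sandwich adapted to our abnormal-form games. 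By Corollary~\ref{cor:abnormalMaximinimax}, the backup value $\cK_{\occ_{\depth-1}}U(\occ_{\depth-1})$ is the (maximin $=$ minimax) value of the local game with payoffs $r(\occ_{\depth-1},\vbeta)+\gamma U(\nxt(\occ_{\depth-1},\vbeta))$, so player~1's equilibrium rule $\beta^{U,1}_{\depth-1}$ guarantees, against the particular response $\beta^{L,2}_{\depth-1}$, that $\cK_{\occ_{\depth-1}}U(\occ_{\depth-1}) \leq r(\occ_{\depth-1},\langle\beta^{U,1}_{\depth-1},\beta^{L,2}_{\depth-1}\rangle) + \gamma\,U(\occ_\depth)$. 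Symmetrically, player~2's equilibrium rule $\beta^{L,2}_{\depth-1}$ in the $L$-game gives $\cK_{\occ_{\depth-1}}L(\occ_{\depth-1}) \geq r(\occ_{\depth-1},\langle\beta^{U,1}_{\depth-1},\beta^{L,2}_{\depth-1}\rangle) + \gamma\,L(\occ_\depth)$. Subtracting, the identical reward terms cancel and leave exactly $\gamma[U(\occ_\depth)-L(\occ_\depth)] = \gamma\,\width(\hat V(\occ_\depth))$.

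For item~1 I would then invoke the stopping criterion: since the trial stopped at $\occ_\depth$, the test $\width(\occ_\depth)>\thr(\depth)$ failed there, hence $\width(\hat V(\occ_\depth)) \leq \thr(\depth)$; because the leaf call returns immediately without any update at or below depth~$\depth$ (and updates are anyway monotone, only lowering $U$ and raising $L$), this is precisely the width seen by the backup at $\occ_{\depth-1}$. It then remains to verify the algebraic identity $\gamma\,\thr(\depth) = \thr(\depth-1) - 2\radius\l_\depth$, which follows by re-indexing the geometric sum in the definition~\eqref{eq|def|thr} of $\thr$; in the clean single-constant form of Lemma~\ref{lem:MaxRadius} this reads $\gamma\,\thr(\depth)=\thr(\depth-1)-2\radius\l$, using $\sum_{i=1}^{\depth}\gamma^{-i}=\tfrac{\gamma^{-\depth}-1}{1-\gamma}$. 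Chaining the three facts gives $\width(\cK_{\occ_{\depth-1}}\hat V(\occ_{\depth-1})) \leq \gamma\,\thr(\depth) = \thr(\depth-1)-2\radius\l_\depth$.

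For item~2 I would exploit that the freshly updated bounds inherit the Lipschitz shape of the cones just added at $\occ_{\depth-1}$. Since $\cK_{\occ_{\depth-1}}U$ is a lower envelope containing the new downward cone of slope $\l_\depth$ and summit $\cK_{\occ_{\depth-1}}U(\occ_{\depth-1})$, we get $\cK_{\occ_{\depth-1}}U(\occ'_{\depth-1}) \leq \cK_{\occ_{\depth-1}}U(\occ_{\depth-1}) + \l_\depth\,\norm{\occ'_{\depth-1}-\occ_{\depth-1}}_{\p}$, and dually $\cK_{\occ_{\depth-1}}L(\occ'_{\depth-1}) \geq \cK_{\occ_{\depth-1}}L(\occ_{\depth-1}) - \l_\depth\,\norm{\occ'_{\depth-1}-\occ_{\depth-1}}_{\p}$. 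Subtracting and using $\norm{\occ'_{\depth-1}-\occ_{\depth-1}}_{\p}\leq\radius$ yields $\width(\cK_{\occ_{\depth-1}}\hat V(\occ'_{\depth-1})) \leq \width(\cK_{\occ_{\depth-1}}\hat V(\occ_{\depth-1})) + 2\radius\l_\depth$, and substituting item~1 makes the two $2\radius\l_\depth$ terms cancel, giving $\thr(\depth-1)$.

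The delicate step is the contraction inequality: it hinges on evaluating both the $U$- and $L$-backups at the \emph{same} profile $\langle\beta^{U,1}_{\depth-1},\beta^{L,2}_{\depth-1}\rangle$ so that the rewards cancel, which is only legitimate because Corollary~\ref{cor:abnormalMaximinimax} certifies that these (non-convex-concave) games have a well-defined value with maximin~$=$~minimax, attained at the respective equilibrium rules returned by $\nes$. A secondary point requiring care is the timing/monotonicity claim that the successor's bound $\width(\hat V(\occ_\depth))\leq\thr(\depth)$ is the one actually used by the backup at $\occ_{\depth-1}$. The remaining threshold algebra and the cone-Lipschitz estimate are routine once the slope of the cone introduced at depth $\depth-1$ is identified with the constant $\l_\depth$ appearing in the bounds.
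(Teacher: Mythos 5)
Your proposal is correct and follows essentially the same route as the paper's proof: the width contraction obtained by evaluating both the $U$- and $L$-backups at the joint profile $\langle\beta^{U,1}_{\depth-1},\beta^{L,2}_{\depth-1}\rangle$ (which is exactly the paper's maximin/minimax sandwich), the leaf stopping test giving $\width(\hat V(\occ_\depth))\leq\thr(\depth)$, the recursion $\gamma\,\thr(\depth)=\thr(\depth-1)-2\radius\l_\depth$, and the $\l_\depth$-Lipschitz continuity of the updated bounds for the second item. No gaps.
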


\begin{theorem}
  \label{thm|termination}
  Algorithm~\ref{alg|HSVI|zsOMG} terminates with an
  $\epsilon$-approximation of $V^*_0(\occ_0)$.
\end{theorem}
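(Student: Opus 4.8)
The plan is to decompose the claim into two independent parts and then combine them: a \emph{soundness} part, that the maintained pair $\hat{V}=(L,U)$ always brackets the optimum, i.e.\ $L_\depth \le V^*_\depth \le U_\depth$ after every update and at every depth; and a \emph{termination} part, that the outer \texttt{while} loop is entered only finitely often. Granting both, termination forces the exit condition $\width(\occ_0)=U(\occ_0)-L(\occ_0)\le\epsilon$, and together with $L(\occ_0)\le V^*_0(\occ_0)\le U(\occ_0)$ this gives $|U(\occ_0)-V^*_0(\occ_0)|\le\epsilon$ and $|L(\occ_0)-V^*_0(\occ_0)|\le\epsilon$, i.e.\ the announced $\epsilon$-approximation. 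Since by Lemma~\ref{lem:finiteTrials} all reachable depths are capped by $T_{\max}=H_{\max}<\infty$, only finitely many (finite-dimensional) depths matter, so the finite-horizon results may be invoked throughout.

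For soundness I would induct over the run. The initializations give $L^{(0)}\le V^*\le U^{(0)}$ (Section~\ref{sec|bound|init}). For the inductive step I need $\cK_\occ$ to preserve bracketing, which rests on two already-established facts. First, by the maximin/minimax characterization (Theorems~\ref{lem:abnormalMaximinimax} and~\ref{cor:abnormalMaximinimax}), $V^*_\depth(\occ)$ is the NEV of the local game $Q^*_\depth(\occ,\cdot)$, and this value is monotone in the continuation value plugged in at $\depth+1$; hence feeding a valid upper bound $U_{\depth+1}\ge V^*_{\depth+1}$ yields a scalar $u\ge V^*_\depth(\occ)$ at $\occ$ (symmetrically $l\le V^*_\depth(\occ)$ for $L$). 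Second, by the Lipschitz continuity of $V^*_\depth$ (Corollary~\ref{cor|V|LC|occ}), the downward cone $U^{(\occ)}(\occ')=u+\l\,\norm{\occ-\occ'}_{\p}\ge V^*_\depth(\occ)+\l\,\norm{\occ-\occ'}_{\p}\ge V^*_\depth(\occ')$ is a global upper bound, so taking the lower envelope with the previously valid $U$ keeps it valid; the symmetric argument handles the upward cones of $L$. The same monotonicity of the envelopes shows that $\width$ can only shrink pointwise along the run, a fact reused below.

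Termination is the substantive part, and I would prove it by contradiction via a packing argument. Every trial returns, having length at most $T_{\max}$ (Lemma~\ref{lem:finiteTrials}), and length at least $1$ (each trial begins with $\width(\occ_0)>\epsilon=\thr(0)$). Suppose infinitely many trials are run; then by pigeonhole some length value $d^*$ is attained by infinitely many of them, and I restrict attention to those. Each such trial descended past $\occ_{d^*-1}$, which requires $\width(\occ_{d^*-1})>\thr(d^*-1)$ at visit time; and, by Theorem~\ref{lem|thr}(2) applied with the theorem's depth equal to $d^*$, once its update at $\occ_{d^*-1}$ is performed one has $\width(\cK_{\occ_{d^*-1}}\hat{V}(\occ'))\le\thr(d^*-1)$ for every $\occ'$ with $\norm{\occ'-\occ_{d^*-1}}_{\p}\le\radius$. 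Because updates only shrink $\width$ pointwise, this resolution is permanent. Consequently the depth-$(d^*-1)$ points $\occ_{d^*-1}^{(1)},\occ_{d^*-1}^{(2)},\dots$ visited by these infinitely many trials are pairwise more than $\radius$ apart: a later one has width $>\thr(d^*-1)$ and so cannot lie in the already-resolved $\radius$-ball of any earlier one. This produces an infinite $\radius$-separated set inside the occupancy states of depth $d^*-1$, which live in the probability simplex over the finite set $\cS\times\vTh_{d^*-1}$ and are therefore totally bounded — a contradiction. Hence only finitely many trials occur and the loop terminates.

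The main obstacle is exactly the infinite branching factor flagged in Section~\ref{sec|HSVI}: after an update the optimistically selected successor may change, so the standard HSVI width-contraction does not propagate the reduction all the way up a trajectory, and termination cannot be argued point-by-point. The remedy is the ball-resolution of Theorem~\ref{lem|thr}, whose slack term $2\radius\l$ (matching the identity $\gamma\,\thr(\depth)=\thr(\depth-1)-2\radius\l$) upgrades a single-point width bound into a uniform bound over an $\radius$-ball; coupled with compactness of the per-depth occupancy simplex and the monotonicity of the updates, this is precisely what makes the packing contradiction go through. The only additional care needed is to confirm that restricting to trials of a single common length $d^*$ legitimizes invoking Theorem~\ref{lem|thr} at $\occ_{d^*-1}$, and that the finite-horizon machinery (Lipschitz continuity, Bellman decomposition) applies because $T_{\max}$ bounds every reachable depth.
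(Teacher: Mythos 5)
Your proof is correct and its core termination argument --- pigeonhole on trial length, the impossibility of an infinite $\radius$-separated set in the compact occupancy simplex at a fixed depth, and the ball-resolution of Theorem~\ref{lem|thr}(2) applied to two trials whose depth-$(T-1)$ points lie within $\radius$ of each other --- is exactly the paper's argument (itself adapted from Horák and Bošanský). The only difference is that you additionally spell out the soundness induction showing that $L \le V^* \le U$ is preserved by the cone updates (via monotonicity of the local-game value in the continuation bound and the Lipschitz continuity of $V^*$), a step the paper leaves implicit but which is needed to pass from loop exit to the claimed $\epsilon$-approximation; this is a welcome completion rather than a divergence.
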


\begin{proof}(Adapted from \cite{HorBos-aaai19})
  Assume for the sake of contradiction that the algorithm does not
  terminate and generates an infinite number of explore trials.
Since the length of a trial is bounded by a finite number
  $T_{\max}$, the number of trials of length $T$ (for some
  $0 \leq T \leq T_{\max}$) must be infinite.
It is impossible to fit an infinite number of occupancy points
  $\occ_T$ satisfying $\norm{\occ_T-\occ'_T}_{\p} > \radius$ within $\Occ_T$.
Hence there must be two trials of length $T$,
  $\{\occ_{\depth,1}\}_{\depth=0}^T$ and $\{\occ_{\depth,2}\}_{\depth=0}^T$, such that
  $\norm{\occ_{T-1,1}-\occ_{T-1,2}}_{\p} \leq \radius$.
Without loss of generality, assume that $\occ_{T-1,1}$ was visited
  the first.
According to Lemma \ref{lem|thr}, the point-based update in
  $\occ_{T-1,1}$ resulted in
  $\width(\hat{V} (\occ_{T-1,2})) \leq \thr(T-1)$---which contradicts
  that the condition on line \ref{alg|hsvi|zsomg|nextS} of
  Algorithm~\ref{alg|HSVI|zsOMG} has not been satisfied for
  $\occ_{T-1,2}$ (and hence that $\{\occ_{t,2}\}_{t=0}^T$ was a trial
  of length $T$).
\end{proof}

Note that the number of trials could be (tediously) upper-bounded by
determining how many balls of radius $\radius$ are required to cover
occupancy simplexes at each depth.

\section{HSVI for zs-POSGs when $\gamma=1$}
\label{sec|HSVI2}

We now focus on 
$\epsilon$-optimally solving finite horizon problems under the total criterion ($\gamma=1$).

\subsection{Algorithm}

The algorithm remains identical to Algorithm~\ref{alg|HSVI|zsOMG}, up
to update operators (which just use $\gamma=1$), but for the the
threshold function used for prematurely terminating trajectories, with
\begin{align}
  \label{eq|def|thr2}
  \thr(\depth) & \eqdef \epsilon - \sum_{i=1}^\depth 2 \radius \l_{\depth-i}.
\end{align}

\paragraph{Setting $\radius$}

As in the discounted case, this threshold function should always
return positive values, which requires a small enough $\radius$.
Let us now upper-bound the maximum possible value $\radius$ using the
simple upper-bounds of the Lipschitz constants:
$\l_\depth=(H-\depth)\cdot(r_{\max}-r_{\min})$.

\begin{restatable}[Proof in App.~\extref{proofLemMaxRadiusBis}]{lemma}{lemMaxRadiusBis}
  \label{lem|MaxRadius|bis}
  \IfAppendix{{\em (originally stated on
      page~\pageref{lem|MaxRadius|bis})}}{}
Using $\l_\depth = (H-\depth) \cdot ( r_{\max} - r_{\min} )$ for any
  $\depth$, and noting that
  \begin{align}
    \label{eq|thr|bis}
    \thr(\depth)
    & = \epsilon - \radius (r_{\max}-r_{\min}) \left[ (2H + 1 - \depth) \depth \right],
  \end{align}
  one can ensure positivity of the threshold at any
  $\depth \in \{1, \dots, H-1 \}$ by enforcing
  $\frac{\epsilon}{(r_{\max}-r_{\min}) (H + 1)H}$.
\end{restatable}

\uline{Side note:} Had we used depth-independent $\l=H(r_{\max}-r_{\min})$, we would get
\begin{align*}
  thr(\depth) & = \epsilon - 2 \radius (r_{\max}-r_{\min}) H \depth, \text{ and} \\
  \radius & < \frac{\epsilon}{2 (r_{\max}-r_{\min}) H^2 },
\end{align*}
which would make for an $\sim$2 times smaller maximum $\radius$, but a
bound more similar in shape to the one obtained for $\gamma<1$.

We shall thus pick $\rho$ in
$(0, \frac{\epsilon}{(r_{\max}-r_{\min}) (H + 1)H})$.
But what is the effect of setting $\radius$ to small or large values?
\begin{itemize}
\item The smaller $\radius$, the larger $\thr(\depth)$, the shorter
  the trajectories, but the smaller the balls and the higher the
  required density of points around the optimal trajectory, thus the
  more trajectories needed to converge.
\item The larger $\radius$, the smaller $\thr(\depth)$, the longer the
  trajectories, but the larger the balls and the lower the required
  density of points around the optimal trajectory, thus the less
  trajectories needed to converge.
\end{itemize}
So, setting $\radius$ means making a compromise between the number of
generated trajectories and their length (up to $H$ for finite horizon
problems).

\subsection{Finite-Time Convergence}

In this finite horizon, each trajectory terminates in at most $H$ steps.
Now, knowing that any trial terminates in bounded time allows deriving
the following results, in order.

\begin{restatable}[Proof in App.~\extref{proofLemThrBis}]{theorem}{lemThrBis}
  \label{lem|thr|bis}
  \IfAppendix{{\em (originally stated on
    page~\pageref{lem|thr|bis})}}{}
Consider a trial $(\occ_0,\dots,\occ_\depth)$ of length $\depth$ and
  consider that the backward updates of $U_{\depth-1}$ and
  $L_{\depth-1}$ have {\bf not yet} been performed.
Then
  \begin{enumerate}
  \item $\width(\cK_{\occ_{\depth-1}}\hat{V}(\occ_{\depth-1})) \leq \thr(\depth-1) - 2 \radius \l_\depth $,
    and
  \item for every $\occ'_{\depth-1}$ satisfying
    $\norm{\occ'_{\depth-1}-\occ_{\depth-1}}_{\p} \leq \radius$, it holds: $\width(\cK_{\occ_{\depth-1}}\hat{V}(\occ'_{\depth-1})) \leq \thr(\depth-1).$
  \end{enumerate}
\end{restatable}

\begin{theorem}
  \label{thm|termination|bis}
  Algorithm~\ref{alg|HSVI|zsOMG} terminates with an
  $\epsilon$-approximation of $V^*_0(\occ_0)$.
\end{theorem}

\begin{proof}(Adapted from \cite{HorBos-aaai19})
  Assume for the sake of contradiction that the algorithm does not
  terminate and generates an infinite number of explore trials.
Since the length of a trial is bounded by a finite number
  $T_{\max}$, the number of trials of length $T$ (for some
  $0 \leq T \leq T_{\max}$) must be infinite.
It is impossible to fit an infinite number of occupancy points
  $\occ_T$ satisfying $\norm{\occ_T-\occ'_T}_{\p} > \radius$ within $\Occ_T$.
Hence there must be two trials of length $T$,
  $\{\occ_{\depth,1}\}_{\depth=0}^T$ and $\{\occ_{\depth,2}\}_{\depth=0}^T$, such that
  $\norm{\occ_{T-1,1}-\occ_{T-1,2}}_{\p} \leq \radius$.
Without loss of generality, assume that $\occ_{T-1,1}$ was visited
  the first.
According to Lemma \ref{lem|thr}, the point-based update in
  $\occ_{T-1,1}$ resulted in
  $\width(\hat{V} (\occ_{T-1,2})) \leq \thr(T-1)$---which contradicts
  that the condition on line \ref{alg|hsvi|zsomg|nextS} of
  Algorithm~\ref{alg|HSVI|zsOMG} has not been satisfied for
  $\occ_{T-1,2}$ (and hence that $\{\occ_{t,2}\}_{t=0}^T$ was a trial
  of length $T$).
\end{proof}

Note that the number of trials could be (tediously) upper-bounded by
determining how many balls of radius $\radius$ are required to cover
occupancy simplexes at each depth.

\section{Discussion}

Inspired by techniques solving POMDPs as belief MDPs or Dec-POMDPs as
occupancy MDPs, we have demonstrated that zs-POSGs could be turned
into a new type of sequential game, namely zs-OMGs, allowing to apply
Bellman's optimality principle.
Value function approximators (with heuristic initializations) can be
used thanks to the Lipschitz continuity of $V^*$, and despite $V^*$
possibly {\bf not} being concave or convex in any relevant statistic.
A variant of HSVI has been derived which provably converges in finite
time to an $\epsilon$-optimal solution.

This approach was motivated by the fact that the corresponding
techniques for POMDPs and Dec-POMDPs provide state-of-the-art solvers.
The time complexity of the algorithm shall depend, among other things,
on that of the maximin/minimax optimization technique in use, and on
how many trials are required before convergence.
We also currently lack empirical comparisons of the resulting
algorithm with existing zs-POSG solution techniques.

Several implementation details could be further discussed as
the maximin/minimax error-bounded optimization algorithm,
the need to regularly prune dominated cones in $U$ and $L$, and
the possible use of compression techniques to reduce the
dimensionality of the occupancy subspaces, as in FB-HSVI
\cite{DibAmaBufCha-jair16}.

Regarding execution, as in single-agent or collaborative multi-agent
settings, while exploration is guided by optimistic decisions
(greediness w.r.t. $U$ for \player $1$ and $L$ for \player $2$), actual
decisions should be pessimistic, \ie, \player $1$ should act
``greedily'' w.r.t. $L$, and $2$ w.r.t. $U$.

Handling finite-horizon settings requires little changes.
The maximum length of trials shall be the minimum between this horizon
and the bound that depends on $\epsilon$ and $\radius$.
Additionally considering $\gamma=1$ shall require revising the
Lipschitz constants and some other formulas.

As often with Dec-POMDPs \cite{SzeChaZil-uai05,DibAmaBufCha-jair16},
each player's strategy is here history-dependent, because one could
not come up with private belief states, which is feasible under
certain assumptions \cite{HorBosPec-aaai17,HorBos-aaai19}.
One could possibly address this issue as MacDermed and Isbell \cite{MacIsb-nips13} did by
assuming that a bounded number of beliefs is sufficient to solve the
problem.

Public actions and observations, as in Poker, could be exploited by
turning the non-observable sequential decision problem faced by the
central planner into a partially observable one, and thus the
deterministic OMG into a probabilistic one.

\bibliographystyle{abbrv}

\ifextended{
  \newpage
  \onecolumn
  \appendix

\section{Appendix}

This appendix mainly provides proofs of several theoretical claims of
the paper.

\subsection{From zs-POSGs to zs-OMGs}
\label{app:fromTo}

The following result shows that the occupancy state is Markovian, \ie,
its value at $\depth$ only depends on its previous value
($\occ_{\depth-1}$), the system dynamics ($P^{z^1,z^2}_{a^1,a^2}$),
and the last behavioral decision rules ($\beta^1_{\depth-1}$ and
$\beta^2_{\depth-1}$).

\begin{lemma}
  \label{lem:nextOcc}
  Given an occupancy state $\occ_{\depth-1}$ and a behavioral decision rule
  profile $\beta_ {\depth-1}$, next occupancy state $\occ_\depth$ is given by
  the following formula (for any $s'$, $\theta^1_{\depth-1}$, $a^1$, $z^1$,
  $\theta^2_{\depth-1}$, $a^2$, $z^2$):
  \begin{align*}
    \occ_\depth(s',(\theta^1_{\depth-1},a^1,z^1),(\theta^2_{\depth-1},a^2,z^2))
    & 
    = \beta^1_{\depth-1}(\theta^1_{\depth-1},a^1) \cdot \beta^2_{\depth-1}(\theta^2_{\depth-1},a^2) 
    \sum_{s} P^{z^1,z^2}_{a^1,a^2}(s'|s) 
    \cdot \occ_{\depth-1} (s,\theta^1_{\depth-1},\theta^2_{\depth-1}).
  \end{align*}
\end{lemma}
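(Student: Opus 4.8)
The plan is to read the occupancy state as the joint law $\occ_\depth(s',\theta^1_\depth,\theta^2_\depth) = \Pr(s',\theta^1_\depth,\theta^2_\depth \mid \occ_0,\vbeta_{0:\depth-1})$ and to obtain the recursion by a single step of Bayesian forward filtering, exploiting (a) the Markov property of the POSG transition--observation kernel and (b) the fact that, under behavioral decision rules with perfect recall, each player's action at $\depth-1$ is drawn as a function of that player's own history alone, the two players randomizing independently.

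First I would write the length-$\depth$ joint history as the one-step extension of the length-$(\depth-1)$ history, $\theta^i_\depth = (\theta^i_{\depth-1},a^i,z^i)$, so that the configuration $\big(s',(\theta^1_{\depth-1},a^1,z^1),(\theta^2_{\depth-1},a^2,z^2)\big)$ is the conjunction of a ``past'' (previous state $s$, histories $\theta^1_{\depth-1},\theta^2_{\depth-1}$), the actions $a^1,a^2$ selected at $\depth-1$, and the transition outcome (new state $s'$, observations $z^1,z^2$). Marginalizing over the unobserved previous state $s$ and applying the chain rule factors this probability into three pieces:
\begin{align*}
  & \occ_\depth\big(s',(\theta^1_{\depth-1},a^1,z^1),(\theta^2_{\depth-1},a^2,z^2)\big)
    = \sum_s \Big[ \Pr(s',z^1,z^2 \mid a^1,a^2,s,\theta^1_{\depth-1},\theta^2_{\depth-1}) \\
  & \qquad\qquad \cdot \Pr(a^1,a^2 \mid s,\theta^1_{\depth-1},\theta^2_{\depth-1}) \cdot \Pr(s,\theta^1_{\depth-1},\theta^2_{\depth-1}) \Big].
\end{align*}

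Then I would identify each factor. The last is exactly $\occ_{\depth-1}(s,\theta^1_{\depth-1},\theta^2_{\depth-1})$ by definition of the preceding occupancy state. The first collapses to $P^{z^1,z^2}_{a^1,a^2}(s'|s)$ because the POSG transition--observation probabilities depend only on the current state and the joint action, not on the past histories. The middle factor is the delicate step, and where I expect the main difficulty to lie: I must argue that $\Pr(a^1,a^2 \mid s,\theta^1_{\depth-1},\theta^2_{\depth-1}) = \beta^1_{\depth-1}(\theta^1_{\depth-1},a^1)\,\beta^2_{\depth-1}(\theta^2_{\depth-1},a^2)$. This rests on two structural facts about the game: each player chooses its action using only its own action--observation history (so conditioning additionally on $s$, which the player never observes, or on the opponent's history, is irrelevant), and the two players randomize independently, so the joint action law factorizes across players. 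Substituting the three identified factors, pulling $\beta^1_{\depth-1}(\theta^1_{\depth-1},a^1)\,\beta^2_{\depth-1}(\theta^2_{\depth-1},a^2)$ out of the sum over $s$ (they do not depend on $s$), and collapsing $\sum_s P^{z^1,z^2}_{a^1,a^2}(s'|s)\,\occ_{\depth-1}(s,\theta^1_{\depth-1},\theta^2_{\depth-1})$ yields precisely the claimed formula. As a by-product, since every factor is a genuine (conditional) probability, the resulting $\occ_\depth$ is automatically a normalized distribution over state--joint-history pairs, and the formula makes manifest that $\occ_\depth$ depends on the past only through $\occ_{\depth-1}$ and $\vbeta_{\depth-1}$, establishing the Markovianity asserted in the surrounding text.
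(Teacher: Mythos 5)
Your proposal is correct and follows essentially the same route as the paper's proof: marginalize over the hidden previous state $s$, apply the chain rule to factor the joint probability into the transition--observation kernel, the action-selection probability, and the previous occupancy state, then use the independence of the players' randomizations and the fact that each decision rule depends only on the player's own history to identify the middle factor with $\beta^1_{\depth-1}(\theta^1_{\depth-1},a^1)\,\beta^2_{\depth-1}(\theta^2_{\depth-1},a^2)$. The only (harmless) additions are your explicit flagging of the action-factorization as the delicate step and the closing remark on normalization, neither of which changes the argument.
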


\begin{proof}
  The proof goes by simply developing the definition:
  \begin{align*}
    \occ_\depth(s',(\theta^1_{\depth-1},a^1,z^1),(\theta^2_{\depth-1},a^2,z^2)) 
    & \eqdef Pr(s',(\theta^1_{\depth-1},a^1,z^1),(\theta^2_{\depth-1},a^2,z^2)) \\
    & = \sum_{s} Pr(s,s',(\theta^1_{\depth-1},a^1,z^1),(\theta^2_{\depth-1},a^2,z^2)) \\
    & = \sum_{s} Pr(s',z^1,z^2 | s,\theta^1_{\depth-1},a^1,\theta^2_{\depth-1},a^2)
    \cdot Pr(s,\theta^1_{\depth-1},a^1,\theta^2_{\depth-1},a^2) \\
    & = \sum_{s} P^{z^1,z^2}_{a^1,a^2}(s'|s) 
    \cdot Pr(a^1,a^2 |s,\theta^1_{\depth-1},\theta^2_{\depth-1})
    \cdot Pr(s,\theta^1_{\depth-1},\theta^2_{\depth-1}) \\
    & = \sum_{s} P^{z^1,z^2}_{a^1,a^2}(s'|s) 
    \cdot Pr(a^1 |s,\theta^1_{\depth-1},\theta^2_{\depth-1})
    \cdot Pr(a^2 |s,\theta^1_{\depth-1},\theta^2_{\depth-1}) 
    \cdot Pr(s,\theta^1_{\depth-1},\theta^2_{\depth-1}) \\
    & = \sum_{s} P^{z^1,z^2}_{a^1,a^2}(s'|s) 
    \cdot \beta^1_{\depth-1}(\theta^1_{\depth-1},a^1) \cdot \beta^2_{\depth-1}(\theta^2_{\depth-1},a^2) 
    \cdot \occ_{\depth-1} (s,\theta^1_{\depth-1},\theta^2_{\depth-1}) \\
    & = \beta^1_{\depth-1}(\theta^1_{\depth-1},a^1) \cdot \beta^2_{\depth-1}(\theta^2_{\depth-1},a^2) 
    \sum_{s} P^{z^1,z^2}_{a^1,a^2}(s'|s) 
    \cdot \occ_{\depth-1} (s,\theta^1_{\depth-1},\theta^2_{\depth-1}).
    \qedhere
  \end{align*}
\end{proof}

\subsection{Back to Mixed Strategies}
\label{proofLemEquivalence}

The following result demonstrate that, instead of reasoning on mixed
strategies constrained to be ``compatible'' with some occupancy state
$\occ_\depth$, one can reason equivalently with behavioral strategies.

\lemConvexity*

\begin{proof}
  Let
$\mu^{i,1}_{0:H-1|\occ_\depth\rangle}$ and
  $\mu^{i,2}_{0:H-1|\occ_\depth\rangle}$ be two mixed strategies of
  $i$ in $M^i_{0:H-1|\occ_\depth\rangle}$; and
$\alpha \in [0,1]$.
We want to show that the mixed strategy
  $\nu^i_{0:H-1} \eqdef \alpha \cdot \mu^{i,1}_{0:H-1|\occ_\depth\rangle} +
  (1-\alpha) \cdot \mu^{i,2}_{0:H-1|\occ_\depth\rangle}$
  is also compatible with $\occ_\depth$.
As for behavioral strategies, this holds if, for any history-action
  pair $(\theta^i_t,a^i)$ (with $t < \depth$),
\begin{align*}
    \text{either }
    Pr(\theta^i_t|\occ_\depth)
    & = 0, \\
    \text{or } 
    Pr(a^i|\nu^i_{0:H-1},\theta^i_t)
    & = Pr(a^i|\occ_\depth,\theta^i_t).
  \end{align*}
  Let us thus consider the case where $Pr(\theta^i_t|\occ_\depth) \neq 0$:
  \begin{align*}
    Pr(a^i|\nu^i_{0:H-1},\theta^i_t)
    & = \sum _{\pi^i_{0:H-1} \in \Pi^i_{0:H-1}(\theta^i_t,a^i)} \nu^i(\pi^i_{0:H-1}) \\
    & = \sum _{\pi^i_{0:H-1} \in \Pi^i_{0:H-1}(\theta^i_t,a^i)} \left(
      \alpha \cdot \mu^{i,1}_{0:H-1|\occ_\depth\rangle}(\pi^i_{0:H-1}) +
      (1-\alpha) \cdot \mu^{i,2}_{0:H-1|\occ_\depth\rangle}(\pi^i_{0:H-1}) 
      \right) \\
    & = 
      \alpha \cdot \Big( \sum_{\substack{\pi^i_{0:H-1} \in \\ \Pi^i_{0:H-1}(\theta^i_t,a^i)}} \mu^{i,1}_{0:H-1|\occ_\depth\rangle}(\pi^i_{0:H-1}) \Big) +
      (1-\alpha) \cdot \Big( \sum_{\substack{\pi^i_{0:H-1} \in \\ \Pi^i_{0:H-1}(\theta^i_t,a^i)}} \mu^{i,2}_{0:H-1|\occ_\depth\rangle}(\pi^i_{0:H-1}) \Big) \\
    & = 
      \alpha \cdot Pr(a^i|\mu^{i,1}_{0:H-1|\occ_\depth\rangle},\theta^i_t) +
      (1-\alpha) \cdot Pr(a^i|\mu^{i,2}_{0:H-1|\occ_\depth\rangle},\theta^i_t) \\
    & = 
      \alpha \cdot Pr(a^i|\occ_\depth,\theta^i_t) +
      (1-\alpha) \cdot Pr(a^i|\occ_\depth,\theta^i_t) \\
    & = Pr(a^i|\occ_\depth,\theta^i_t).
  \end{align*}
  As this holds for any $(\theta^i_t,a^i)$ that has non-zero
  probability in $\occ_\depth$, then indeed a convex combination of
  two mixed strategies compatible with $\occ_\depth$ is also
  compatible.
$M^i_{0:H-1|\occ_\depth\rangle}$ is thus a convex set.
\end{proof}

\subsection{Nash Equilibria in Subgames?}
\label{app:subNEs}

The definition of Nash equilibrium still applies in each of the
infinitely many (induced) subgames and, as explained by the following
lemma, NEs are ``nested''.

\begin{lemma}
  \label{lem:nestedNEs}
  A Nash equilibrium of $V_\depth( \occ_\depth, \cdot, \cdot)$ induces
  a Nash equilibrium in any (induced) subgame
  $V_{\depth'}( \occ_{\depth'}, \cdot, \cdot)$ for
  $\depth'\geq \depth$.
\end{lemma}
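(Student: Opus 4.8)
The plan is to run the standard one-shot (single) deviation argument, leaning on the deterministic, Markovian structure of the zs-OMG to split the value at $\depth$ into a fixed prefix reward plus a discounted continuation value at $\depth'$. First I would fix a Nash equilibrium profile $\vbeta^*_{\depth:H-1}=\langle\beta^{1*}_{\depth:H-1},\beta^{2*}_{\depth:H-1}\rangle$ of $V_\depth(\occ_\depth,\cdot,\cdot)$ and, for an arbitrary $\depth'\geq\depth$, let $\occ_{\depth'}$ be the occupancy state reached at $\depth'$ by iterating $\nxt$ under $\vbeta^*$ from $\occ_\depth$. Since the dynamics are deterministic (Lemma~\ref{lem:nextOcc}), this $\occ_{\depth'}$ is unique; it is precisely the occupancy state of the \emph{induced} subgame, so I do not need to consider subgames reachable by other profiles (which is what would be required for subgame perfection, cf. the footnote).

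The key step is a value-decomposition identity. For any suffix strategies $\beta^1_{\depth':H-1}$ and $\beta^2_{\depth':H-1}$, prepending the equilibrium prefix $\langle\beta^{1*}_{\depth:\depth'-1},\beta^{2*}_{\depth:\depth'-1}\rangle$ yields
\begin{align*}
& V_\depth\big(\occ_\depth, \langle \beta^{1*}_{\depth:\depth'-1}, \beta^1_{\depth':H-1}\rangle, \langle \beta^{2*}_{\depth:\depth'-1}, \beta^2_{\depth':H-1}\rangle\big) \\
& \quad = C + \gamma^{\depth'-\depth}\, V_{\depth'}\big(\occ_{\depth'}, \beta^1_{\depth':H-1}, \beta^2_{\depth':H-1}\big),
\end{align*}
where $C$ is the expected discounted reward accumulated over the interval $\depth:\depth'-1$ under the fixed prefix, and is \emph{independent} of the suffix choices. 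This follows from the definition of $V_\depth$ together with Lemma~\ref{lem:nextOcc}: the occupancy reached at $\depth'$ depends only on the decision rules applied on $\depth:\depth'-1$, which are here frozen to $\vbeta^*$, so it equals $\occ_{\depth'}$ regardless of the suffix; and the linearity of the instant reward in the occupancy state (Lemma~\ref{lem|V|lin|occ}) makes the prefix rewards separate out as the constant $C$ while the tail collapses exactly to $V_{\depth'}(\occ_{\depth'},\cdot,\cdot)$.

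Then I would argue by contradiction. If $\vbeta^*_{\depth':H-1}$ were not a NE of the subgame at $\occ_{\depth'}$, then one player would have a strictly profitable unilateral deviation there; by the symmetry of the zero-sum setting I treat player $1$ (maximizer), the minimizer case being analogous with the inequality reversed. Thus there is a $\beta^1_{\depth':H-1}$ with $V_{\depth'}(\occ_{\depth'}, \beta^1_{\depth':H-1}, \beta^{2*}_{\depth':H-1}) > V_{\depth'}(\occ_{\depth'}, \beta^{1*}_{\depth':H-1}, \beta^{2*}_{\depth':H-1})$. Feeding both sides through the decomposition — the constant $C$ cancels and the factor $\gamma^{\depth'-\depth}>0$ preserves the strict inequality — shows that player $1$ strictly improves at $\depth$ by playing $\langle\beta^{1*}_{\depth:\depth'-1},\beta^1_{\depth':H-1}\rangle$ against the unchanged $\beta^{2*}_{\depth:H-1}$, contradicting the NE property of $\vbeta^*$ at $\occ_\depth$. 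Hence $\vbeta^*_{\depth':H-1}$ is a NE at $\occ_{\depth'}$, and since $\depth'$ was arbitrary the claim follows.

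The main obstacle is making the decomposition identity watertight: verifying that freezing the prefix simultaneously freezes the accumulated reward $C$ and pins the occupancy at $\depth'$ to $\occ_{\depth'}$ for \emph{every} suffix, and that the tail is exactly $V_{\depth'}(\occ_{\depth'},\cdot,\cdot)$ with the occupancy weights already absorbing the history probabilities. Once that separation is established via Lemma~\ref{lem:nextOcc} and the linearity of $V$ in $\occ_\depth$, the contradiction step is routine.
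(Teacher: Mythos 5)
Your proposal is correct and follows essentially the same route as the paper's proof: both restrict attention to deviations that freeze the equilibrium prefix, use the deterministic Markovian dynamics to peel off the (fixed) prefix reward and the discount factor, and thereby reduce the NE inequality at $\occ_\depth$ to the NE inequality at the induced $\occ_{\depth'}$. The only (cosmetic) differences are that the paper performs a single step $\depth\to\depth+1$ and then inducts, arguing directly with inequalities, whereas you telescope the decomposition from $\depth$ to $\depth'$ in one shot and phrase the conclusion as a contradiction.
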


\begin{proof}
  For any $\depth \in \{ 0 \twodots H-1 \}$ and any $\occ_\depth$,
(i) $V_\depth( \occ_\depth, \mu^1, \mu^2)$ is linear in both $\mu^1$
  and $\mu^2$, and
(ii) the space of mixed strategy profiles constrained by
  $\occ_\depth$ is convex (Lemma~\ref{lem:convexity}).
This allows applying von Neumann's minimax theorem, so that this
  subgame at $\occ_\depth$ induces a 2-player zero-sum normal-form game for which at least one NE exists
  (and all NEs are equivalent).

  Lemma~\ref{lem:convexity} (p.~\pageref{lem:convexity}) allows
  reasoning with behavorial strategy profiles instead of mixed ones,
  and we can focus on what happens from $\depth$ on.
So, let $\vbeta^{*}_{\depth:H-1}$ be a NES of
  $V_{\depth}(\occ_\depth,\cdot,\cdot)$.
Then the definition of Nash equilibrium allows writing in
  particular, $\forall \beta^1_{\depth+1:H-1}$:
  \begin{align*}
    V_{\depth}(\occ_\depth,
    \langle \beta^{1,*}_{\depth} \oplus \beta^{1,*}_{\depth+1:H-1} \rangle,
    \beta^{2,*}_{\depth:H-1}) & \geq
    \ V_{\depth}(\occ_\depth,
      \langle \beta^{1,*}_{\depth} \oplus \beta^1_{\depth+1:H-1} \rangle,
      \beta^{2,*}_{\depth:H-1}),
    \intertext{thus,}
    r(\occ_\depth, \beta^{1,*}_{\depth}, \beta^{2,*}_{\depth}) +
      \gamma V_{\depth+1}( \nxt(\occ_\depth, \beta^{1,*}_{\depth}, \beta^{2,*}_{\depth}), 
      \beta^{1,*}_{\depth+1:H-1}, \beta^{2,*}_{\depth+1:H-1}) \hspace{-3cm} \\
    & \geq
     \ r(\occ_\depth, \beta^{1,*}_{\depth}, \beta^{2,*}_{\depth}) +
      \gamma V_{\depth+1}( \nxt(\occ_\depth, \beta^{1,*}_{\depth}, \beta^{2,*}_{\depth}), 
      \beta^{1}_{\depth+1:H-1}, \beta^{2,*}_{\depth+1:H-1})
    \intertext{or, equivalently,}
    V_{\depth+1}( \nxt(\occ_\depth, \beta^{1,*}_{\depth}, \beta^{2,*}_{\depth}), 
    \beta^{1,*}_{\depth+1:H-1}, \beta^{2,*}_{\depth+1:H-1})
    & \geq
    \ V_{\depth+1}( \nxt(\occ_\depth, \beta^{1,*}_{\depth}, \beta^{2,*}_{\depth}), 
    \beta^{1}_{\depth+1:H-1}, \beta^{2,*}_{\depth+1:H-1}).
  \end{align*}
  With the symmetric property holding for player $2$, this implies that
  $\vbeta^*_{\depth+1:H-1} \eqdef
  (\beta^{1,*}_{\depth+1:H-1},
  \beta^{2,*}_{\depth+1:H-1})$
  is a NES of the {\em constrained} 2-player zero-sum (normal-form)
  game
  $V_{\depth+1}( \nxt(\occ_\depth,\vbeta^*_{\depth}),\cdot,\cdot)$.
By induction, a NES is obtained for any subgame
  $V_{\depth'}( \occ_{\depth'}, \cdot, \cdot)$ ($\depth'\geq \depth$).
\end{proof}

In particular, as expected, any Nash equilibrium solution of our
original game $V_0(\occ_0,\cdot,\cdot)$ induces a Nash equilibrium in
any of its reachable subgames, ensuring a rational behavior at each
time step.

\subsection{Solving POSGs as Occupancy Markov Games}

\subsubsection{An Abnormal-Form Game?}
\label{appLinearity}

The next two lemmas lead to demonstrating that
$V_\depth(\occ_\depth,\langle \cdot, \cdot\rangle)$ is linear in
$\beta^i_{\depth'}$ for $i\in \{1,2\}$ and $\depth'\geq \depth$.

\begin{lemma}
  \label{lem|occ|lin|dr}
  At depth $\depth$, $\nxt(\occ_\depth,\vbeta_\depth)$ is
  linear in $\beta^1_\depth$ and $\beta^2_\depth$, where
  $\vbeta_\depth=\langle \beta^1_\depth, \beta^2_\depth\rangle$.
\end{lemma}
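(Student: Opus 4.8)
The plan is to read off linearity directly from the closed-form expression for the transition function established in Lemma~\ref{lem:nextOcc}. There, each coordinate of the successor occupancy state $\nxt(\occ_\depth,\vbeta_\depth)$, indexed by $(s',(\theta^1_\depth,a^1,z^1),(\theta^2_\depth,a^2,z^2))$, is shown to equal $\beta^1_\depth(\theta^1_\depth,a^1)\,\beta^2_\depth(\theta^2_\depth,a^2)\sum_s P^{z^1,z^2}_{a^1,a^2}(s'\mid s)\,\occ_\depth(s,\theta^1_\depth,\theta^2_\depth)$. The key observation is that this is a product in which each of the two arguments $\beta^1_\depth$, $\beta^2_\depth$ appears in exactly one factor.

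First I would fix $\beta^2_\depth$ and $\occ_\depth$ and regard the decision rule $\beta^1_\depth$ as a vector in $\reals^{\Theta^1_\depth\times\cA^1}$ whose entries are the scalars $\beta^1_\depth(\theta^1_\depth,a^1)$. With this identification, the displayed coordinate equals $\beta^1_\depth(\theta^1_\depth,a^1)$ times a constant $c(s',\theta^1_\depth,a^1,z^1,\theta^2_\depth,a^2,z^2)$ that does not depend on $\beta^1_\depth$. Hence every output coordinate is a single-entry linear functional of $\beta^1_\depth$, so the map $\beta^1_\depth\mapsto\nxt(\occ_\depth,\langle\beta^1_\depth,\beta^2_\depth\rangle)$ is linear: additivity and homogeneity are verified coordinate by coordinate, each reducing to the corresponding property of multiplication by the constant $c$. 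By the symmetric role of the two factors in the product, the same argument with $\beta^1_\depth$ held fixed yields linearity in $\beta^2_\depth$.

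The argument is essentially a one-line substitution, so there is no real obstacle; the only point that deserves care is pinning down what \emph{linear} means here. Decision rules live on a product of simplices rather than on a full vector space, so the claim should be read as linearity of the natural extension of $\nxt$ to the ambient space $\reals^{\Theta^1_\depth\times\cA^1}$ (equivalently, affine-linearity along any segment of decision rules), which is exactly what the coordinate-wise factorization above establishes. It is worth flagging that the map is only \emph{separately} linear in each argument and not jointly so---the coordinates are in fact bilinear in $(\beta^1_\depth,\beta^2_\depth)$---but the lemma asserts only the separate linearity invoked later, which the above delivers.
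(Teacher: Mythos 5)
Your proof is correct and follows essentially the same route as the paper's: both read off the coordinate-wise factorization $\beta^1_\depth(\theta^1,a^1)\,\beta^2_\depth(\theta^2,a^2)\cdot(\text{const})$ from the explicit transition formula and conclude separate linearity in each private decision rule. Your added remarks on the precise sense of ``linear'' (extension to the ambient vector space) and on the map being bilinear rather than jointly linear are sound clarifications but do not change the argument.
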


\begin{proof}
  Let $\occ_\depth$ be an occupancy state at depth $\depth$ and
  $\vbeta_\depth$ be a decision rule.
Then the next occupancy state
  $\tilde{\occ}=\nxt(\occ_\depth,\vbeta_\depth)$ satisfies, for any
  $\tilde{s}$ and $(\vth,\va,\vz)$:
  \begin{align*}
    \tilde{\occ}(\tilde{s},(\vth,\va,\vz)) 
    & = \sum_{s\in \cS, \vth \in \vTh} \occ_{\depth}(s,\vth) \vbeta_\depth(\vth,\va) \PP{s}{\va}{s'}{\vz} \\
    & = \sum_{s\in \cS, \vth \in \vTh} \occ_{\depth}(s,\vth) \beta^1_\depth(\theta^1,a^1) \beta^2_\depth(\theta^2,a^2) \PP{s}{a^1,a^2}{s'}{z^1,z^2} \\
    & = \sum_{\vth \in \vTh} \beta^1_\depth(\theta^1,a^1) \beta^2_\depth(\theta^2,a^2) \left( \sum_{s\in \cS} \occ_{\depth}(s,\vth) \PP{s}{a^1,a^2}{s'}{z^1,z^2} \right).
  \end{align*}
  The next occupancy state thus also evolves linearly w.r.t. {\em
    private} decision rules at $\depth$. \end{proof}

\begin{lemma}
  \label{lem|r|lin|dr}
  At depth $\depth$, $r_\depth(\occ_\depth,\vbeta_\depth)$ is
  linear in $\beta^1_\depth$ and $\beta^2_\depth$, where
  $\vbeta_\depth=\langle \beta^1_\depth, \beta^2_\depth\rangle$.
\end{lemma}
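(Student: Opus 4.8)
The plan is to mirror the proof of the preceding Lemma~\ref{lem|occ|lin|dr}: I would simply develop the definition of the induced reward and regroup the resulting sum so that the decision rule of interest is exposed as a single outer factor. Recall that, by definition,
\[
  r(\occ_\depth,\vbeta_\depth)
  = \sum_{s,\vth_\depth} \occ_\depth(s,\theta^1_\depth,\theta^2_\depth)
    \sum_{a^1,a^2} \beta^1_\depth(\theta^1_\depth,a^1)\,\beta^2_\depth(\theta^2_\depth,a^2)\,r(s,a^1,a^2),
\]
where $\vth_\depth = (\theta^1_\depth,\theta^2_\depth)$.

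To establish linearity in $\beta^1_\depth$, I would fix $\occ_\depth$ and $\beta^2_\depth$ and collect the terms of the (now single) sum by the pair $(\theta^1_\depth,a^1)$, yielding
\[
  r(\occ_\depth,\vbeta_\depth)
  = \sum_{\theta^1_\depth,a^1} \beta^1_\depth(\theta^1_\depth,a^1)
    \left[ \sum_{s,\theta^2_\depth,a^2} \occ_\depth(s,\theta^1_\depth,\theta^2_\depth)\,
      \beta^2_\depth(\theta^2_\depth,a^2)\,r(s,a^1,a^2) \right].
\]
Each bracketed coefficient is independent of $\beta^1_\depth$, so $r(\occ_\depth,\langle\cdot,\beta^2_\depth\rangle)$ is a fixed linear combination of the coordinates $\beta^1_\depth(\theta^1_\depth,a^1)$, which is exactly the asserted linearity in $\beta^1_\depth$.

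Linearity in $\beta^2_\depth$ then follows by the symmetric argument, exchanging the roles of players $1$ and $2$ and factoring out $\beta^2_\depth(\theta^2_\depth,a^2)$ instead; jointly, these two observations show that $r(\occ_\depth,\cdot,\cdot)$ is in fact bilinear in $(\beta^1_\depth,\beta^2_\depth)$. There is essentially no obstacle here, the computation being routine; the only point worth flagging is interpretive, namely that since a decision rule lives on a simplex rather than a full vector space, ``linear'' must be read as ``the restriction of a linear map on the ambient space of $\beta^i_\depth$-vectors,'' exactly the convention already used for $\nxt$ in Lemma~\ref{lem|occ|lin|dr}.
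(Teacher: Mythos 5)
Your proposal is correct and matches the paper's own proof: both simply expand the definition of $r(\occ_\depth,\vbeta_\depth)$ and regroup the sums so that the decision-rule terms appear as outer factors with coefficients independent of them, yielding bilinearity. The remark about reading ``linear'' as the restriction of a linear map to the simplex of decision rules is a fair clarification but not needed beyond what the paper already assumes.
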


\begin{proof}
  When visiting some occupancy state $\occ_\depth$ (at depth
  $\depth \in \{0, \dots, \depth_{\max}-1\}$), for joint behavioral
  decision rule $\vbeta_\depth = \langle \beta^1, \beta^2 \rangle$,
  the expected immediate reward at $\depth$ is:
  \begin{align*}
    r_\depth(\occ_\depth,\vbeta_\depth) 
    & = \sum_{s,a^1,a^2} \left( \sum_\vth \occ_\depth(s,\vth) \beta^1(\theta^1,a^1) \beta^2(\theta^2,a^2) \right) r(s,a^1, a^2) \\
    & = \sum_{\theta^1, \theta^2} \sum_{a^1,a^2} \beta^1(\theta^1,a^1) \beta^2(\theta^2,a^2) \left( \sum_s \occ_\depth(s,\vth) r(s,a^1, a^2) \right),
  \end{align*}
  \ie, it is linear in $\beta^1$ as well as $\beta^2$ (hence bilinear).
\end{proof}

\begin{corollary}
  \label{cor|V|lin|dr}
  At depth $\depth$,
  $V_\depth(\occ_\depth, \vbeta_{\depth:H-1})$ is linear in
  $\beta^1_{\depth'}$ as well as $\beta^2_{\depth'}$ for any
  $\depth' \in \{\depth \twodots H-1 \}$.

\end{corollary}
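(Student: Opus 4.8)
The plan is to argue by backward induction on the starting depth $\depth$, exploiting the one-step (Bellman/Shapley) decomposition
\[
  V_\depth(\occ_\depth, \vbeta_{\depth:H-1}) = r(\occ_\depth, \vbeta_\depth) + \gamma\, V_{\depth+1}\big( \nxt(\occ_\depth, \vbeta_\depth),\, \vbeta_{\depth+1:H-1} \big),
\]
which holds for finite $H$ with the convention $V_H \equiv 0$. Fix a player $i$ and a target time step $\depth' \in \{\depth \twodots H-1\}$, and regard every other decision rule (the opponent's rules, and all rules at time steps other than $\depth'$) as fixed; I want to show that the resulting map $\beta^i_{\depth'} \mapsto V_\depth(\occ_\depth, \vbeta_{\depth:H-1})$ is linear. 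In the base case $\depth = H-1$ the only decision rule is $\vbeta_{H-1}$ and $V_{H-1}(\occ_{H-1}, \vbeta_{H-1}) = r(\occ_{H-1}, \vbeta_{H-1})$, which Lemma~\ref{lem|r|lin|dr} gives as linear in both $\beta^1_{H-1}$ and $\beta^2_{H-1}$.

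For the inductive step, assume the claim at $\depth+1$, i.e. that $V_{\depth+1}(\occ_{\depth+1}, \vbeta_{\depth+1:H-1})$ is linear in $\beta^i_{\depth''}$ for every $\depth'' \geq \depth+1$, and split on the position of $\depth'$. If $\depth' > \depth$, then neither $r(\occ_\depth, \vbeta_\depth)$ nor $\nxt(\occ_\depth, \vbeta_\depth)$ involves $\beta^i_{\depth'}$, so in the decomposition the first summand is constant and the next occupancy state $\occ_{\depth+1} = \nxt(\occ_\depth, \vbeta_\depth)$ is fixed w.r.t. $\beta^i_{\depth'}$; the dependence therefore passes entirely through $V_{\depth+1}(\occ_{\depth+1}, \vbeta_{\depth+1:H-1})$, which is linear in $\beta^i_{\depth'}$ by the induction hypothesis, and a constant plus $\gamma$ times a linear function is linear. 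If $\depth' = \depth$, then $r(\occ_\depth, \vbeta_\depth)$ is linear in $\beta^i_\depth$ by Lemma~\ref{lem|r|lin|dr}, while for the second summand the map $\beta^i_\depth \mapsto \occ_{\depth+1} = \nxt(\occ_\depth, \vbeta_\depth)$ is linear by Lemma~\ref{lem|occ|lin|dr} and the map $\occ_{\depth+1} \mapsto V_{\depth+1}(\occ_{\depth+1}, \vbeta_{\depth+1:H-1})$ is linear by Lemma~\ref{lem|V|lin|occ}; composing these two linear maps and multiplying by $\gamma$ preserves linearity, and summing the two linear summands closes the induction.

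The main obstacle is the case $\depth' = \depth$: it is the only place where two distinct linearity results must be chained (linearity of $\nxt$ in the current decision rule with linearity of the value in the occupancy state), so I must ensure that these are homogeneous linearities in the vector of action probabilities---which they are, since $\beta^i_\depth$ occurs with degree exactly one in the formulas of Lemmas~\ref{lem|r|lin|dr} and~\ref{lem|occ|lin|dr}---and check that invoking Lemma~\ref{lem|V|lin|occ} is not circular, which holds because its own induction relies only on linearity of $r$ and $\nxt$ in $\occ_\depth$ and never on linearity in decision rules. A fully self-contained alternative avoids Lemma~\ref{lem|V|lin|occ} altogether by expanding $V_\depth = \sum_{t=\depth}^{H-1} \gamma^{t-\depth} r(\occ_t, \vbeta_t)$ and checking term by term: each $\occ_t$ with $t > \depth'$ is linear in $\beta^i_{\depth'}$ as a composition of the linear map of Lemma~\ref{lem|occ|lin|dr} with the occupancy-linear maps of Lemma~\ref{lem|occ|lin}, and $r(\occ_t, \cdot)$ is linear in its occupancy argument; but the backward induction above is shorter and I would adopt it.
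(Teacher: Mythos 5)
Your proof is correct and follows essentially the same route as the paper's: backward induction on $\depth$ using the one-step decomposition $V_\depth = r(\occ_\depth,\vbeta_\depth) + \gamma V_{\depth+1}(\nxt(\occ_\depth,\vbeta_\depth),\vbeta_{\depth+1:H-1})$, with Lemma~\ref{lem|r|lin|dr} and Lemma~\ref{lem|occ|lin|dr} handling the case $\depth'=\depth$ via composition with the occupancy-linearity of $V_{\depth+1}$, and the induction hypothesis handling $\depth'>\depth$. Your explicit check that invoking Lemma~\ref{lem|V|lin|occ} is not circular is a small touch of extra care the paper leaves implicit.
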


\begin{proof}
  First, the property trivially holds for $\depth=H-1$.

  Let us now assume that it holds for some $\depth+1 \in \{1\twodots H-1\}$.
Then, we have at $\depth$:
  \begin{align*}
    V_\depth(\occ_\depth,\vbeta_{\depth:H-1})  
    & \specialcell{
      = r_\depth(\occ_\depth,\vbeta_\depth) 
      + \gamma V_{\depth+1}\left( \nxt (\occ,\vbeta_\depth), \vbeta_{\depth+1:H-1} \right)
      \hfill \text{\{using Lemma~\ref{lem|r|lin|dr}:\}}} \\
& = \left[ \sum_{\vth,a^1,a^2} \beta^1_\depth(\theta^1,a^1) \beta^2_\depth(\theta^2,a^2) \left( \sum_s \occ(s,\vth) r(s,a^1, a^2) \right) \right] 
      + \gamma V_{\depth+1}\left( \nxt (\occ,\vbeta_\depth), \vbeta_{\depth+1:H-1} \right) .
  \end{align*}
  As
  \begin{itemize}
  \item $\nxt(\occ_\depth,\vbeta_\depth)$ is linear in decision rules
    $\beta^1_\depth$ and $\beta^2_\depth$ (Lemma~\ref{lem|occ|lin|dr}) and
  \item $V_{\depth+1}(\occ_{\depth+1}, \vbeta_{\depth+1:H-1})$
    is linear in $\occ_{\depth+1}$,
  \end{itemize}
  then, by composition (and other basic combinations),
  $V_\depth(\occ_\depth,\vbeta_{\depth:H-1})$ is linear in
  decision rules $\beta^1_\depth$ and $\beta^2_\depth$.

  Also, for any $\depth' > \depth$, the first term (expected instant
  reward) is independent of $\beta_{\depth'}$, and the second term is
  linear in $\beta^1_{\depth'}$ and $\beta^2_{\depth'}$ (by induction
  hypothesis), so that $V_\depth(\occ_\depth,\vbeta_{\depth:H-1})$ is
  linear in $\beta^1_{\depth'}$ and $\beta^2_{\depth'}$.

  Repeating this process, by induction the property holds for all
  $\depth \in \{0\twodots H-1\}$.

\end{proof}

One issue is that $V_\depth(\occ_\depth,\vbeta_{\depth:H-1})$ is not
linear in $\beta^1_{\depth:H-1}$ but multi-linear in
$\beta^1_{\depth'}$ for all $\depth' \geq \depth$ (idem for player
$2$).
As a consequence, this function may not be convex in
$\beta^1_{\depth:H-1}$ (or concave in
$\beta^2_{\depth:H-1}$).

\subsubsection{Properties of the Maximin and Minimax Values}

The next two results demonstrate that solving
$Q^*(\occ_\depth,\vbeta_\depth)$ for maximin and minimax values allows
finding one Nash equilibrium strategy profile (NES), so that Bellman's
optimality principle can be applied.

\lemAbnormalMaximinimax*

\begin{proof}
  \label{proofLemAbnormalMaximinimax}

Focusing, without loss of generality, on player $1$, we have
  (complementary explanations follow for numbered lines in
  particular):
  \begin{align}
    maximin(\occ_\depth) 
    & \eqdef \max_{\beta^1_\depth} \min_{\beta^2_\depth} Q^*_\depth(\occ_\depth,\beta^1_\depth,\beta^2_\depth) \nonumber \\
    & = \max_{\beta^1_\depth} \min_{\beta^2_\depth} \left[ r(\occ_\depth, \beta^1_\depth, \beta^2_\depth)
      + \gamma V^*_{\depth+1}(\nxt(\occ_\depth,\beta^1_\depth, \beta^2_\depth)) \right] \nonumber \\
    \intertext{($V^*_{\depth+1}(\nxt(\occ_\depth,\beta^1_\depth, \beta^2_\depth))$ being the Nash equilibrium value of normal-form game $V_{\depth+1}(\nxt(\occ_\depth,\beta^1_\depth, \beta^2_\depth), \mu^1, \mu^2 )$:)} & = \max_{\beta^1_\depth} \min_{\beta^2_\depth} \left[ r(\occ_\depth, \beta^1_\depth, \beta^2_\depth)
      + \gamma
      \max_{\mu^1\in M^1_{| \nxt(\occ_\depth, \vbeta_\depth) \rangle}}
      \min_{\mu^2\in M^2_{| \nxt(\occ_\depth, \vbeta_\depth) \rangle}}
      V_{\depth+1}(\nxt(\occ_\depth,\beta^1_\depth, \beta^2_\depth), \mu^1, \mu^2 ) \right] \nonumber \\
    \intertext{(there is no loss in precising how some occupancy state has been reached:)}
    & = \max_{\beta^1_\depth} \min_{\beta^2_\depth} \left[ r(\occ_\depth, \beta^1_\depth, \beta^2_\depth)
      + \gamma
      \max_{\mu^1\in M^1_{| \occ_\depth, \vbeta_\depth \rangle}}
      \min_{\mu^2\in M^2_{| \occ_\depth, \vbeta_\depth \rangle}}
      V_{\depth+1}(\nxt(\occ_\depth,\beta^1_\depth, \beta^2_\depth), \mu^1, \mu^2 ) \right] \nonumber \\
    \intertext{(the knowledge of $\beta^{-i}_\depth$ just tells $i$ which histories are irrelevant, and thus can be ignored:)}
    & = \max_{\beta^1_\depth} \min_{\beta^2_\depth} \left[ r(\occ_\depth, \beta^1_\depth, \beta^2_\depth)
      + \gamma
      \max_{\mu^1\in M^1_{| \occ_\depth, \beta^1_\depth \rangle}}
      \min_{\mu^2\in M^2_{| \occ_\depth, \beta^2_\depth \rangle}}
      V_{\depth+1}(\nxt(\occ_\depth,\beta^1_\depth, \beta^2_\depth), \mu^1, \mu^2 ) \right]  \label{eq|ignoring} \\
& = \max_{\beta^1_\depth} \min_{\beta^2_\depth}
      \max_{\mu^1\in M^1_{| \occ_\depth, \beta^1_\depth\rangle}}
      \min_{\mu^2\in M^2_{| \occ_\depth, \beta^2_\depth\rangle}}
      \left[ r(\occ_\depth, \beta^1_\depth, \beta^2_\depth)
      + \gamma V_{\depth+1}(\nxt(\occ_\depth,\beta^1_\depth, \beta^2_\depth), \mu^1, \mu^2 ) \right] \nonumber \\
\intertext{(using the equivalence between maximin and minimax values for the (constrained normal-form) game at $\depth+1$, the last two max and min operators can be swapped:)}
    & = \max_{\beta^1_\depth} \min_{\beta^2_\depth}
      \min_{\mu^2\in M^2_{| \occ_\depth, \beta^2_\depth \rangle}}
      \max_{\mu^1\in M^1_{| \occ_\depth, \beta^1_\depth \rangle}}
      \left[ r(\occ_\depth, \beta^1_\depth, \beta^2_\depth)
      + \gamma V_{\depth+1}(\nxt(\occ_\depth,\beta^1_\depth, \beta^2_\depth), \mu^1, \mu^2) \right] \nonumber \\
\intertext{(merging both mins and observing that decision rule $\beta^2_\depth$
    at time $\depth$ can be retrieved as a function of $\mu^2$ (noted
    $\beta^2_\depth(\mu^2)$):)}
& = \max_{\beta^1_\depth}
      \min_{\mu^2 \in M^2_{| \occ_\depth \rangle}}
      \max_{\mu^1 \in M^1_{| \occ_\depth, \beta^1_\depth \rangle}}
      \left[ r(\occ_\depth, \beta^1_\depth, \beta^2_\depth(\mu_2))
      + \gamma V_{\depth+1}(\nxt(\occ_\depth,\beta^1_\depth), \mu^1, \mu^2 ) \right]  \nonumber \\ \intertext{(using again the minimax theorem's equivalence between maximin and minimax on an appropriate game:)}
    & = \max_{\beta^1_\depth}
      \max_{\mu^1\in M^1_{| \occ_\depth, \beta^1_\depth\rangle}}
      \min_{\mu^2\in M^2_{| \occ_\depth\rangle}}
      \left[ r(\occ_\depth, \beta^1_\depth, \beta^2_\depth(\mu_2))
      + \gamma V_{\depth+1}(\nxt(\occ_\depth,\beta^1_\depth, \beta^2_\depth(\mu_2)), \mu^1, \mu^2 ) \right] \label{eq|appropriateGame} \\
    \intertext{(merging both maxs and observing that decision rule $\beta^1_\depth$
    at time $\depth$ can be retrieved as a function of $\mu^1$ (noted
    $\beta^1_\depth(\mu^1)$):)}
& =
      \max_{\mu^1\in M^1_{| \occ_\depth\rangle}}
      \min_{\mu^2\in M^2_{| \occ_\depth\rangle}}
      \left[ r(\occ_\depth, \beta^1_\depth(\mu_1), \beta^2_\depth(\mu_2))
      + \gamma V_{\depth+1}(\nxt(\occ_\depth,\beta^1_\depth(\mu^1), \beta^2_\depth(\mu_2)), \mu^1, \mu^2 ) \right] \nonumber \\ \intertext{(again with the equivalence property discussed before the lemma:)}
    & =
      \max_{\mu^1\in M^1_{| \occ_\depth\rangle}}
      \min_{\mu^2\in M^2_{| \occ_\depth\rangle}}
      V_\depth(\occ_\depth, \mu^1, \mu^2) \nonumber \\
    & =
      \max_{\beta^1_{\depth:H-1 | \occ_\depth\rangle}}
      \min_{\beta^2_{\depth:H-1 | \occ_\depth\rangle}}
      V_\depth(\occ_\depth, \beta^1_{\depth:H-1}, \beta^2_{\depth:H-1}) \nonumber \\
    & \eqdef
      V^*_\depth(\occ_\depth). \nonumber
  \end{align}

  \uline{Line \ref{eq|ignoring}} is obtained by observing that the
  knowledge of $\beta^{-i}_\depth$ only allows $i$ to ignore some
  irrelevant histories, thus reducing the size of the search space,
  but does not influence the expected return.
  
  \uline{Line \ref{eq|appropriateGame}} results from the observation
  that, while $M^1_{| \occ_\depth, \beta^1_\depth \rangle}$ and
  $M^2_{| \occ_\depth \rangle}$ allow to actually make decision over
  different time intervals, we are here minimizing over $\mu^2$ while
  maximizing over $\mu^1$ over a function that is linear in both input
  spaces.
This amounts to solving some 2-player zero-sum normal-form game,
  hence the applicability of von Neumann's minimax theorem.

  The above derivation tells us that the maximin value (the best
  outcome player $1$ can guarantee whatever player $2$'s strategy) in
  the one-time-step game is thus the Nash equilibrium value (NEV) for
  the complete subgame from $\depth$ onwards.
\end{proof}

\corAbnormalMaximinimax*

\begin{proof}
\label{proofCorAbnormalMaximinimax}
  When player $1$ (resp. $2$) selects a strategy guaranteeing the
  maximin (resp. minimax) value, the same value is guaranteed for both
  players, so that none of them can do better by opting for a
  different strategy.
This situation is thus, by definition, a Nash equilibrium.
\end{proof}

\subsubsection{Maximin and Minimax Computation}
\label{proofLemOccLin}
\label{proofLemQLC}

The next two results demonstrate the Lipschitz-continuity of
$Q^*_\depth(\occ_\depth,\beta^1_\depth,\beta^2_\depth)$ in both
$\beta^1_\depth$ and $\beta^2_\depth$, which will allow finding
$\epsilon$-optimal solutions of the maximin and minimax problems.

\lemOccLin*

\begin{proof}
  Let $\occ_\depth$ be an occupancy state at time $\depth$ and
  $\vbeta_\depth$ be a decision rule.
Then the next occupancy state
  $\occ' = \nxt(\occ_\depth,\vbeta_\depth)$ satisfies, for any $s'$
  and $(\vth,\va,\vz)$:
  \begin{align*}
    \occ'(s',(\vth,\va,\vz))
    & \eqdef Pr(s', \vth,\va,\vz | \occ, \beta^1_\depth, \beta^2_\depth) \\
& = \sum_{s\in \cS}
      Pr(s', \vz | s, \va )
      Pr(\va | \vth, \beta^1_\depth, \beta^2_\depth)
      Pr(s, \vth | \occ) \\
    & = \sum_{s\in \cS} \PP{s}{\va}{s'}{\vz} \vbeta_\depth(\vth,\va) \occ_{\depth}(s,\vth) \\
    & = \beta^1_\depth(\theta^1, a^1) \beta^2_\depth(\theta^2, a^2) \sum_{s\in \cS} \PP{s}{\va}{s'}{\vz} \occ_{\depth}(s,\vth).
  \end{align*}
The next occupancy state thus evolves linearly w.r.t.
(i) {\em private} decision rules for a given private history, and
(ii) the occupancy state.
  
  The $1$-Lipschitz-continuity holds because each component of
  $\occ_\depth$ is distributed over multiple components of $\occ'$.
Indeed, let us view two occupancy states as vectors
  $\vx,\vy \in \reals^n$, and their corresponding next states under
  $\vbeta_\depth$ as $M \vx$ and $M \vy$, where
  $M \in \reals^{m\times n}$ is the corresponding transition matrix
  (\ie, which turns $\occ$ into
  $\occ' \eqdef \nxt(\occ_\depth,\vbeta_\depth)$.
Then,
  \begin{align*}
    \norm{M\vx - M\vy}_{1}
& \eqdef \sum_{j=1}^m \ \abs{ \sum_{i=1}^n M_{i,j} (x_i - y_i) } \\
    & \leq \sum_{j=1}^m  \sum_{i=1}^n \abs{M_{i,j} (x_i - y_i) }
    & \text{(convexity of $\abs{\cdot}$)} \\
    & = \sum_{j=1}^m  \sum_{i=1}^n M_{i,j} \abs{ x_i - y_i }
    & \text{($\forall {i,j},\ M_{i,j}\geq 0$)} \\
    & = \sum_{i=1}^n \underbrace{\sum_{j=1}^m M_{i,j}}_{=1} \abs{ x_i - y_i } 
    & \text{($M$ is a transition matrix)} \\
& \eqdef \norm{\vx-\vy}_1.
    & \qedhere
  \end{align*}
\end{proof}

\lemQLC*

\begin{proof}
  As demonstrated in Sec.~\ref{sec|LC|V},
  Corollary~\ref{cor|V|LC|occ}, in finite horizon problems, the
  optimal value function is LC in occupancy space.

  Then, by definition:
  \begin{align*}
    Q^*_\depth(\occ_\depth,\beta^1_\depth,\beta^2_\depth)
    & = r(\occ_\depth, \beta^1_\depth, \beta^2_\depth)
      + \gamma V^*_{\depth+1}(\nxt(\occ_\depth, \beta^1_\depth, \beta^2_\depth)),
  \end{align*}
  where the first term (reward-based) is
$\l_r$-LC (in each $\beta^i_\depth$), with
  $\l_r= \frac{r_{\max} - r_{\min}}{2}$, and
the second term is $(\gamma
  \cdot \l_{H-\depth} \cdot 1)$-LC, with $\l_{H-\depth} =
  \frac{V^{\max}_{H-\depth}-V^{\min}_{H-\depth}}{2}$.
$Q^*_\depth$ is thus $\l^{Q^*}_{H-\depth}$-LC with
  $\l^{Q^*}_{H-\depth} = \l_r + \gamma \cdot \l^V_{H-\depth}$.
\end{proof}

\subsection{Properties of $V^*$}

\subsubsection{Finite-Horizon Lipschitz-Continuity of $V^*$}
\label{proofLemVLinOcc}
\label{proofCorVLCOcc}

The next two results demonstrate that, in the finite horizon setting,
$V^*$ is Lipschitz-continuous (LC) in occupancy space, which allows
defining LC upper- and lower-bounding approximators.

\lemVLinOcc*

\begin{proof}
  This property trivially holds for $\depth=H-1$ because
  \begin{align*}
    V_{H-1}(\occ_{H-1},\vbeta_{H-1:H-1}) 
    & = r(\occ_{H-1},\vbeta_{H-1}) \\
    & = \sum_{s,a^1,a^2} \left( \sum_\vth \occ_{H-1}(s,\vth) \beta^1_{H-1}(\theta^1,a^1) \beta^2_{H-1}(\theta^2,a^2) \right) r(s,a^1, a^2) \\
    & = \sum_{s,\vth} \occ_{H-1}(s,\vth) \left( \sum_{a^1,a^2} \beta^1_{H-1}(\theta^1,a^1) \beta^2_{H-1}(\theta^2,a^2) r(s,a^1, a^2) \right).
  \end{align*}
Now, let us assume that the property holds for
  $\depth+1 \in \{1 \twodots H-1\}$.
Then,
\begin{align*}
    V_\depth(\occ_\depth,\vbeta_{\depth:H-1}) 
    & = \left[ \sum_{s,a^1,a^2} \left( \sum_\vth \occ(s,\vth) \beta^1_\depth(\theta^1,a^1) \beta^2_\depth(\theta^2,a^2) \right) r(s,a^1, a^2) \right] 
    + \gamma V_{\depth+1}\left( \nxt (\occ,\vbeta_\depth), \vbeta_{\depth+1:H-1} \right) \\
    & = \left[ \sum_{s,\vth} \occ(s,\vth) \left( \sum_{a^1,a^2} \beta^1_\depth(\theta^1,a^1) \beta^2_\depth(\theta^2,a^2) r(s,a^1, a^2) \right) \right] 
    + \gamma V_{\depth+1}\left( \nxt (\occ,\vbeta_\depth), \vbeta_{\depth+1:H-1} \right) .
  \end{align*}
  As
  \begin{itemize}
  \item $\nxt(\occ_\depth,\vbeta_\depth)$ is linear in $\occ_\depth$
    {\footnotesize (Lemma~\ref{lem|occ|lin})} and
  \item $V_{\depth+1}(\occ_{\depth+1}, \vbeta_{\depth+1:H-1})$ is
    linear in $\occ_{\depth+1}$ {\footnotesize (induction hypothesis)},
  \end{itemize}
  their composition,
  $V_{\depth+1} ( \nxt (\occ_\depth,\vbeta_\depth),
  \vbeta_{\depth+1:H-1} )$,
  is also linear in $\occ_\depth$, and so is
  $V_\depth(\occ_\depth,\vbeta_{\depth:H-1})$.
\end{proof}

\corVLCOcc*

\begin{proof}
  At depth $\depth$, the value of any behavioral strategy
  $\vbeta_{\depth:H-1}$ is bounded, independently of $\occ_\depth$, by
  \begin{align*}
    V^{\max}_\depth & \eqdef h(H,\gamma,\tau) \cdot \max_{s,\va}r(s,\va) 
    \text{ and } \\
    V^{\min}_\depth &\eqdef h(H,\gamma,\tau) \cdot \min_{s,\va}r(s,\va), \\
    \text{where } h(H,\gamma,\tau) & \eqdef
                                     \begin{cases}
                                       \frac{1-\gamma^{H-\depth}}{1-\gamma} & \text{if } \gamma<1, \\
                                       (H-\depth) & \text{if } \gamma=1. \\
                                     \end{cases}
  \end{align*}
Because the occupancy space at $\depth$ is a probability simplex,
  for any $\occ$ and $\occ'$ in this space,
  $\norm{\occ-\occ'}_{1} \leq 2$.
As a consequence, such a value function being linear in
  $\occ_\depth$ (\cf Lemma~\ref{lem|V|lin|occ}), it is also
  $\l_{H-\depth}$-LC, \ie,
  \begin{align*}
    \abs{V_{\vbeta_{\depth:H-1}}(\occ) -V_{\vbeta_{\depth:H-1}}(\occ')}
    & \leq \l_{H-\depth} \norm{\occ-\occ'}_{\p} \quad (\forall \occ, \occ'), \\
    \text{with }
    \l_{H-\depth}
    & = \frac{V^{\max}_{H-\depth}-V^{\min}_{H-\depth}}{2}.
  \end{align*}

  Considering now optimal solutions, this means that, at depth
  $\depth$ and for any $(\occ,\occ') \in \Occ_\depth$:
  \begin{align*}
    V^*_{\depth}(\occ) - V^*_{\depth}(\occ') 
    & = \max_{\beta^1_{\depth:H-1}} \min_{\beta^2_{\depth:H-1}} V_{\depth}(\occ, \beta^1_{\depth:H-1}, \beta^2_{\depth:H-1}) 
    - \max_{\beta'^1_{\depth:H-1}} \min_{\beta'^2_{\depth:H-1}} V_{\depth}(\occ', \beta'^1_{\depth:H-1}, \beta'^2_{\depth:H-1}) \\
& \leq \max_{\beta^1_{\depth:H-1}} \min_{\beta^2_{\depth:H-1}} \left[ V_{\depth}(\occ', \beta^1_{\depth:H-1}, \beta^2_{\depth:H-1}) + \l_{H-\depth} \norm{\occ-\occ'}_{\p} \right] 
    - \max_{\beta'^1_{\depth:H-1}} \min_{\beta'^2_{\depth:H-1}} V_{\depth}(\occ', \beta'^1_{\depth:H-1}, \beta'^2_{\depth:H-1}) \\
& = \l_{H-\depth} \norm{\occ-\occ'}_{\p}.
  \end{align*}
Symmetrically,
  $V^*_{\depth}(\occ) - V^*_{\depth}(\occ') \geq -\l_{H-\depth}
  \norm{\occ-\occ'}_{\p}$, hence the expected result:
  \begin{align*}
    \abs{ V^*_{\depth}(\occ) - V^*_{\depth}(\occ') } 
    & \leq \l_{H-\depth} \norm{\occ-\occ'}_{\p}.
    \qedhere
  \end{align*}
\end{proof}

\subsection{HSVI for POSGs when $\gamma<1$}
\label{proofLemMaxRadius}
\label{proofLemFiniteTrials}
\label{proofLemThr}

The following results help
(i) tune zs-OMG-HSVI's radius parameter $\rho$, ensuring that
trajectories will always stop,
and (ii) then demonstrate the finite time convergence of this
algorithm.

\lemMaxRadius*

\begin{proof}
  First, we have (for $\depth\geq1$):
  \begin{align*}
    \thr(\depth)
    & = \gamma^{-\depth}\epsilon - \sum_{i=1}^\depth 2 \radius \l \gamma^{-i} \\
& = \gamma^{-\depth}\epsilon - 2 \radius \l \gamma^{-1} \frac{\gamma^{-\depth}-1}{\gamma^{-1}-1} \\
    & = \gamma^{-\depth}\epsilon - 2 \radius \l \frac{\gamma^{-\depth}-1}{1-\gamma}.
  \end{align*}
  Then, let us derive the following equivalent inequalities:
  \begin{align*}
    0
    & < \thr(\depth) \\ 2 \radius \l \frac{\gamma^{-\depth}-1}{1-\gamma}
    & < \gamma^{-\depth}\epsilon \\
\radius
    & < \frac{1}{2\l} \frac{1-\gamma}{\gamma^{-\depth}-1} \gamma^{-\depth} \epsilon \\
\radius
    & < \frac{1}{2\l} \frac{1-\gamma}{1-\gamma^{\depth}}  \epsilon.
\end{align*}
  To ensure positivity of the threshold for any $\depth \geq 1$, one
  thus just needs to set $\radius$ as a positive value smaller than
  $\frac{1}{2\l} \frac{1-\gamma}{1-\gamma^{H}} \epsilon$. \end{proof}

\lemFiniteTrials*

\begin{proof}{(detailed version)}
  Since $W$ is the largest possible width, any trajectory stops in the
  worst case at depth $\depth$ such that
  \begin{align*}
\thr(\depth) & < \WUL \\
\gamma^{-\depth}\epsilon - 2 \radius \l \frac{\gamma^{-\depth}-1}{1-\gamma}
    & < \WUL \qquad \text{(from Eq. (\ref{eq|thr}))} \\
\gamma^{-\depth} \underbrace{\left(\epsilon - \frac{2 \radius \l}{1-\gamma} \right)}_{>0 \quad \text{(Lem.~\ref{lem:MaxRadius})}}
    & < \WUL - \frac{2 \radius \l }{1-\gamma}  \\
\gamma^{-\depth} 
    & < \frac{
      \WUL - \frac{2 \radius \l }{1-\gamma}
    }{
      \epsilon - \frac{2 \radius \l}{1-\gamma}
    }\\
-\depth\ln(\gamma)
    & < \ln\left(\frac{
          \WUL - \frac{2 \radius \l }{1-\gamma}
        }{
          \epsilon - \frac{2 \radius \l}{1-\gamma}
        }\right)\\
\depth 
    & <
    \log_{\gamma}\left(\frac{
        \epsilon - \frac{2 \radius \l}{1-\gamma}
      }{
        \WUL - \frac{2 \radius \l }{1-\gamma}
      }\right). \qedhere
  \end{align*}
\end{proof}

Here is a small preliminary result.

\begin{restatable}{lemma}{lemThrRadius}
  \label{lem|thr|radius}
  For any $\depth \in 1..H-1$,
  \begin{align*}
    \gamma \thr(\depth) = \thr(\depth-1) - 2 \radius \l_{\depth-1}.
  \end{align*}
\end{restatable}

\begin{proof}
Let us first remind the definition of $\thr(\depth)$ from Equation \eqref{eq|def|thr}:
  \begin{align*}
    \thr(\depth)
    & \eqdef \gamma^{-\depth}\epsilon - \sum_{i=1}^\depth 2 \radius \l_{\depth-i} \gamma^{-i}.
      \intertext{We can then write:}
      \thr(\depth-1) - \gamma \cdot \thr(\depth) 
    & =  \left( \gamma^{-(\depth-1)}\epsilon - \sum_{i=1}^{\depth-1} 2 \radius \l_{(\depth-1)-i} \gamma^{-i} \right)
      - \gamma \cdot\left( \gamma^{-\depth} \epsilon - \sum_{i=1}^\depth 2 \radius \l_{\depth-i} \gamma^{-i} \right)\\
    & =  \gamma^{-\depth+1}\epsilon - \sum_{i=1}^{\depth-1} 2 \radius \l_{(\depth-1)-i} \gamma^{-i}
      - \gamma^{-\depth+1}\epsilon + \gamma \sum_{i=1}^\depth 2 \radius \l_{\depth-i} \gamma^{-i} \\
    & = - \sum_{i=1}^{\depth-1} 2 \radius \l_{(\depth-1)-i} \gamma^{-i}
      + \gamma \sum_{i=1}^\depth 2 \radius \l_{\depth-i} \gamma^{-i} \\
    & = - \sum_{i=1}^{\depth-1} 2 \radius \l_{(\depth-1)-i} \gamma^{-i}
      + \sum_{i=1}^\depth 2 \radius \l_{\depth-i} \gamma^{-i+1} \\
    & = - \sum_{i=1}^{\depth-1} 2 \radius \l_{(\depth-1)-i} \gamma^{-i}
      + \sum_{j=0}^{\depth-1} 2 \radius \l_{\depth-(j+1)} \gamma^{-j} \\
    & = 2 \radius \l_{\depth-1}.
  \end{align*}
\end{proof}

\lemThr*

\begin{proof}
The trial terminated at depth $\depth$,
so that $\occ'=\occ_\depth$ (the only occupancy state that can be
  reached from $\occ_{\depth-1}$ when following
  $\beta^{U,1},\beta^{L,2}$ from line \ref{alg|hsvi|zsomg|nextS} of
  Alg.~\ref{alg|HSVI|zsOMG}) must satisfy
  \begin{align*}
    \width(\hat{V}(\occ_\depth)) & \leq \thr(\depth).
  \end{align*}
  Then:
  \begin{align*}
    \width(\cK_{\occ_{\depth-1}}\hat{V}(\occ_{\depth-1}))
& = \width(\cH\hat{V}(\occ_{\depth-1}))
\\
    & \specialcell{
      \ = \cH U(\occ_{\depth-1}) - \cH L(\occ_{\depth-1})
      \hfill \text{(def. of $\width(\cdot)$)}
      } \\
    & \specialcell{
      \ = \nev(\Gamma^{\occ_{\depth-1}}(U)) - \nev(\Gamma^{\occ_{\depth-1}}(L))
      \hfill \text{(def. of $\cH$)}
      } \\
    & = \max_{\beta^1}\min_{\beta^2} Q^{*,U}(\occ_{\depth-1},\beta^1,\beta^2) - \min_{\beta^2} \max_{\beta^1} Q^{*,L}(\occ_{\depth-1},\beta^1,\beta^2) \\
    & = \min_{\beta^2} Q^{*,U}(\occ_{\depth-1},\beta^{U,1},\beta^2) -  \max_{\beta^1} Q^{*,L}(\occ_{\depth-1},\beta^1,\beta^{L,2}) \\
    & \leq Q^{*,U}(\occ_{\depth-1},\beta^{U,1},\beta^{L,2}) -  Q^{*,L}(\occ_{\depth-1},\beta^{U,1},\beta^{L,2}) \\
    & = \gamma (U(\nxt(\occ_{\depth-1},\beta^{U,1},\beta^{L,2})) - L(\nxt(\occ_{\depth-1},\beta^{U,1},\beta^{L,2}))) \\
    & = \gamma \width(\hat{V}(\occ_{\depth})) \\
    & \leq \gamma \thr(\depth) \\
    & \specialcell{
      \ = \thr(\depth-1) - 2 \radius \l_{\depth-1} \hfill \text{(from Lemma \ref{lem|thr|radius})}.
      }
\end{align*}
  This proves the first point.

  Now, the updated approximators $\cK_{\occ_{\depth-1}}U_{\depth-1}$
  and $\cK_{\occ_{\depth-1}}L_{\depth-1}$ are both $\l_{\depth-1}$-LC,
  which immediately gives the second point.
\end{proof}

\subsection{HSVI for POSGs when $\gamma=1$}
\label{proofLemMaxRadiusBis}
\label{proofLemFiniteTrialsBis}
\label{proofLemThrBis}

The following results help
(i) tune zs-OMG-HSVI's radius parameter $\rho$, ensuring that
trajectories will always stop,
and (ii) then demonstrate the finite time convergence of this
algorithm.

\lemMaxRadiusBis*

\begin{proof}
  First, we have (for $\depth\in \{1,\dots,H-1\}$):
  \begin{align*}
    \thr(\depth)
    & \eqdef \epsilon - \sum_{i=1}^\depth 2 \radius \l_{\depth-i} \\
    & = \epsilon - \sum_{i=1}^\depth 2 \radius (H-(\depth-i))\cdot(r_{\max}-r_{\min}) \\
    & = \epsilon - 2 \radius (r_{\max}-r_{\min}) \left[ \depth(H-\depth) + \sum_{i=1}^\depth i \right]  \\
    & = \epsilon - 2 \radius (r_{\max}-r_{\min}) \left[ \depth H - \depth^2 + \frac{1}{2}\depth (\depth+1) \right]  \\
    & = \epsilon - 2 \radius (r_{\max}-r_{\min}) \left[  (H+\frac{1}{2}) \depth - \frac{1}{2} \depth^2 \right]  \\
    & = \epsilon - \radius (r_{\max}-r_{\min}) \left[ (2H+1) \depth - \depth^2 \right]  \\
    & = \epsilon - \radius (r_{\max}-r_{\min}) \left[ (2H + 1 - \depth) \depth \right].
  \end{align*}
  Then, let us derive the following equivalent inequalities:
  \begin{align*}
    0
    & < \thr(\depth) \\ 
    \radius (r_{\max}-r_{\min}) (2H + 1 - \depth) \depth
    & < \epsilon
    & \text{(holds when $\depth=0$ and $\depth=H+1$)}
    \\
    \radius 
    & < \frac{\epsilon}{(r_{\max}-r_{\min}) (2H + 1 - \depth) \depth}
    & \text{(when $\depth \in \{0,\dots,H+1\}$).}
  \end{align*}
  The function
  $f: \depth \mapsto \frac{\epsilon}{(r_{\max}-r_{\min}) (2H + 1 -
    \depth) \depth}$
  reaches its minimum (for $\depth \in (0,H+1)$) when
  $\depth=H+\frac{1}{2}$.
To ensure positivity of the threshold for any $\depth \in \{1, \dots, H-1 \}$, one
  thus just needs to set $\radius$ as a positive value smaller than
  $\frac{\epsilon}{(r_{\max}-r_{\min}) (H + 1)H}$.
\end{proof}

\lemThrBis*

\begin{proof}
  Observe that from the definition of the sequence $\thr(\depth)$ in
  Equation \eqref{eq|def|thr} it follows that
  \begin{align*}
    \gamma \thr(\depth) = \thr(\depth-1) - 2 \radius \l_\depth.
  \end{align*}
  Moreover, the trial terminated at depth $\depth$.
Therefore, $\occ'=\occ_\depth$ (the only occupancy state that can be
  reached from $\occ_{\depth-1}$ when following
  $\beta^{U,1},\beta^{L,2}$ from line \ref{alg|hsvi|zsomg|nextS} of
  Alg.~\ref{alg|HSVI|zsOMG}) must satisfy
  \begin{align*}
    \width(\hat{V}(\occ_\depth)) & \leq \thr(\depth).
  \end{align*}
  Then:
  \begin{align*}
    \width(\cK_{\occ_{\depth-1}}\hat{V}(\occ_{\depth-1}))
& = \width(\cH\hat{V}(\occ_{\depth-1}))
\\
    & \specialcell{
      \ = \cH U(\occ_{\depth-1}) - \cH L(\occ_{\depth-1})
      \hfill \text{(def. of $\width(\cdot)$)}
      } \\
    & \specialcell{
      \ = \nev(\Gamma^{\occ_{\depth-1}}(U)) - \nev(\Gamma^{\occ_{\depth-1}}(L))
      \hfill \text{(def. of $\cH$)}
      } \\
    & = \max_{\beta^1}\min_{\beta^2} Q^{*,U}(\occ_{\depth-1},\beta^1,\beta^2) - \min_{\beta^2} \max_{\beta^1} Q^{*,L}(\occ_{\depth-1},\beta^1,\beta^2) \\
    & = \min_{\beta^2} Q^{*,U}(\occ_{\depth-1},\beta^{U,1},\beta^2) -  \max_{\beta^1} Q^{*,L}(\occ_{\depth-1},\beta^1,\beta^{L,2}) \\
    & \leq Q^{*,U}(\occ_{\depth-1},\beta^{U,1},\beta^{L,2}) -  Q^{*,L}(\occ_{\depth-1},\beta^{U,1},\beta^{L,2}) \\
    & = \gamma (U(\nxt(\occ_{\depth-1},\beta^{U,1},\beta^{L,2})) - L(\nxt(\occ_{\depth-1},\beta^{U,1},\beta^{L,2}))) \\
    & = \gamma \width(\hat{V}(\occ_{\depth})) \\
    & \leq \gamma \thr(\depth) \\
    & = \thr(\depth-1) - 2 \radius \l_\depth.
\end{align*}
  This proves the first point.

  Now, the updated approximators $\cK_{\occ_{\depth-1}}U_{\depth-1}$
  and $\cK_{\occ_{\depth-1}}L_{\depth-1}$ are both $\l_\depth$-LC,
  which immediately gives the second point.
\end{proof}

 }{}

\end{document}